\newcommand*{\AISTAT}{}
\newcommand*{\CAMREADY}{}
\newtheorem{lemma}{Lemma}
\newtheorem{theorem}{Theorem}
\newtheorem{proposition}{Proposition}
\newtheorem{claim}{Claim}
\theoremstyle{definition}
\newtheorem{definition}{Definition}
\newcommand{\x}{{\mathbf x}}
\newcommand{\y}{{\mathbf y}}
\newcommand{\z}{{\mathbf z}}
\newcommand{\w}{{\mathbf w}}
\newcommand{\s}{{\mathbf s}}
\newcommand{\aaa}{{\mathbf a}}
\newcommand{\bb}{{\mathbf b}}
\newcommand{\E}{{\mathcal E}}
\newcommand{\EEE}{{\mathbf E}}
\newcommand{\R}{{\mathbb R}}
\newcommand{\N}{{\mathbb N}}
\newcommand{\indc}[1]{\mathbb{I}\left[#1\right]}
\newcommand{\shorteq}{\scalebox{0.8}[1]{\text{$=$}}}
\newcommand{\children}[1]{{\operatorname{ch}\!\left( #1 \right)}}
\newcommand{\scope}[1]{{\operatorname{sc}\!\left( #1 \right)}}
\newcommand{\qnu}[1]{{\operatorname{nu}\!\left( #1 \right)}}
\newcommand{\qde}[1]{{\operatorname{de}\!\left( #1 \right)}}
\newcommand{\qeff}[1]{{\operatorname{eff}\!\left( #1 \right)}}
\newcommand{\qcond}[1]{{\operatorname{cond}\!\left( #1 \right)}}
\newcommand{\CMO}{\operatorname{CMO}}
\begin{document}


\newcommand\gapaftersection{\vspace{-2mm}}
\newcommand\gapbeforesection{\vspace{-2.5mm}}
\newcommand\gapaftersubsection{\vspace{-0.5mm}}
\newcommand\gapbeforesubsection{\vspace{-1mm}}
\setlength{\abovedisplayskip}{4pt}
\setlength{\belowdisplayskip}{4pt}
\setlength{\textfloatsep}{12.0pt plus 0.0pt minus 0.0pt} 


\ifdefined\ICLR
	\title{Sum-Product-Quotient Networks}
	\author{Or Sharir \& Amnon Shashua\\
	The Hebrew University of Jerusalem \\
	\texttt{\{or.sharir,shashua\}@cs.huji.ac.il}
	}
	\maketitle
    \ifdefined\CAMREADY
          \ifdefined\ARXIVONLY
                \lhead{}
            \renewcommand{\headrulewidth}{0pt}
          \fi
    \fi
\fi

\ifdefined\AISTAT
    \twocolumn[
        \aistatstitle{Sum-Product-Quotient Networks}
        \ifdefined\CAMREADY
            \aistatsauthor{ Or Sharir \And Amnon Shashua}
            \aistatsaddress{
                The Hebrew University of Jerusalem \\
                \texttt{or.sharir@cs.huji.ac.il} \And
                The Hebrew University of Jerusalem \\
                \texttt{shashua@cs.huji.ac.il}
            }
        \else
            \aistatsauthor{ Anonymous Author 1 \And Anonymous Author 2}
            \aistatsaddress{ Unknown Institution 1 \And Unknown Institution 2}
        \fi
    ]
    \ifdefined\CAMREADY
        \ifdefined\ARXIVONLY
            \makeatletter
            \renewcommand{\AISTATS@appearing}{}
            \renewcommand{\@copyrightspace}{}
            \makeatother
        \fi
    \fi
\fi


\begin{abstract}
\vspace{-4mm}
We present a novel tractable generative model that extends Sum-Product
Networks (SPNs) and significantly boosts their power. We call it
\emph{Sum-Product-Quotient Networks} (SPQNs), whose  core concept is to
incorporate conditional distributions into the model by direct computation
using quotient nodes, e.g.~$P(A|B){=}\frac{P(A,B)}{P(B)}$. We provide sufficient
conditions for the tractability of SPQNs that generalize and relax the
decomposable and complete tractability conditions of SPNs. These relaxed
conditions give rise to an exponential boost to the expressive efficiency of our
model, i.e.~we prove that there are distributions which SPQNs can compute
efficiently but require SPNs to be of exponential size. Thus, we narrow the gap
in expressivity between tractable graphical models and other Neural
Network-based generative models.
\vspace{-3mm}
\end{abstract}

\gapbeforesection
\section{Introduction} \label{sec:intro}
\gapaftersection

Sum-Product Networks~(SPNs)\citep{Poon:2012vd} are a class of generative models
capable of exact and tractable inference, where the probability function is
directly modelled as a simple computational graph composed of just
\emph{weighted sum} and \emph{product} nodes, known also as Arithmetic
Circuits~\citep{Shpilka:2010cu}, following a strict set of constraints on
its connectivity. SPNs have been applied to solve a wide range of tasks, e.g.
image classification~\citep{Gens:2012vq}, activity
recognition~\citep{Amer:2012uq,Amer:2016hs}, and missing data~\citep{tmm}.
While SPNs have certain advantages in some areas, in cases where expressiveness
is a limiting factor, they fall far behind contemporary generative models such
as those leveraging neural networks as their inference
engine~\citep{JMLR:v17:16-272,vandenOord:2016um,Dinh:2016vt}.

When SPNs were first introduced, it was hypothesized that perhaps all tractable
distributions could be represented efficiently by SPNs. However, this hypothesis
was later proven to be false by~\citet{Martens:2014tr}. More specifically, they
have shown that the uniform distribution on the spanning trees of a complete
graph on $n$ vertices, which is known to be tractable by other methods, cannot
be realized by SPNs, unless their size is exponential in $n$. The reason behind
this limitation is not due to the simple operations on which they are built on,
as any efficiently computable function could be approximated arbitrarily well by
a polynomially-sized arithmetic circuits~\citep{Hoover:1990jj}, but rather its
the strict structural constraints of SPNs that are required for tractability.

In this paper, we introduce an extension to SPNs, which we call
\emph{Sum-Product-Quotient Networks}~(SPQNs for short), that addresses the
limited expressivity of SPNs. The underlying concept behind our extension is to
incorporate conditional probabilities into the model through direct computation,
i.e.~by repeatedly applying the formula $P(A|B) = \frac{P(A,B)}{P(B)}$.
Specifically, we show that by adding a quotient node, i.e.~a node with two
inputs that computes their division, we can relax the structural constraints of
SPNs and still have a model capable of tractable inference, where each internal
node represents a conditional probability over its input variables. Moreover,
we prove that while SPQNs can represent any distribution SPNs can, by virtue
of being their extension, there exists distributions that are efficient for
SPQNs but require SPNs to be of exponential size, proving SPQNs are
exponentially more expressively efficient.

The rest of the article is organized as follows. In sec.~\ref{sec:pre} we
briefly describe the SPNs model and its basic concepts. This is followed by 
sec.~\ref{sec:model} in which we present our SPQNs extension, and prove that the
resulting model is indeed tractable. In sec.~\ref{sec:analysis} we analyze the
expressive efficiency of SPQNs with respect to SPNs. Finally, we discuss the
implications of our model on prior work and our plans for future research in
sec.~\ref{sec:discussion}.

\gapbeforesection
\section{Preliminaries} \label{sec:pre}
\gapaftersection

In this section we give a brief description of Sum-Product Networks~(SPNs).
For simplicity, we limit our description to probability models over binary
variables, where the extension to higher-dimensional or continuous variables
is quite straightforward.

An SPN over binary random variables $X_1, \ldots, X_N$ is a rooted
computational directed acyclic graph, which computes the unnormalized
probability function of the evidence $x_1,\ldots,x_N \in \{0,1,*\}$, denoted by
$\Psi(x_1,\ldots,x_N)$, where $*$ denotes missing variables under which
the SPN computes just the unnormalized marginal of the visible variables. The
leaves of an SPN are univariate indicators of the binary variables,
i.e.~$\indc{x_i=0}$ and $\indc{x_i=1}$, with the special property that for $x_i = *$
all respective indicators of $x_i$ equal $1$. The internal nodes of the SPN
compute either a positive weighted sum or a product, i.e.~an SPN is
an Arithmetic Circuit over the indicator variables defined above. We denote by
$S$ the set of sum nodes, by $P$ the set of product nodes, by $I$ the set of
indicator nodes, and by $V = S \cup P \cup I$ the set of all nodes in the SPN.
For all $v \in V$, we denote by $\children{v}$ the set of children nodes
pointing  to $v$, and define the scope of $v$, denoted by $\scope{v}$, as the
index set of all variables, such that there exists a path starting at an
indicator of a variable, which ends at the node $v$. Formally, we define
$\scope{v} \equiv \{i\}$ for leaf nodes of the $i$-th variables, and otherwise
$\scope{v} \equiv \cup_{c \in \children{v}} \scope{c}$. We denote the function
induced by the sub-graph rooted at $v$ over the variables in $\scope{v}$ by
$\Psi_v(\cdot)$. Last, we define the following structural properties for SPNs:

\begin{definition}
An SPN is complete if for every sum node $v \in S$ and for every
$c_1, c_2 \in \children{v}$ it holds that $\scope{c_1} = \scope{c_2}$.
\end{definition}

\begin{definition}
An SPN is decomposable if for every product node $v \in P$ and for
every $c_1, c_2 \in \children{v}$, such that $c_1 \neq c_2$,
it holds that $\scope{c_1} \cap \scope{c_2} = \emptyset$.
\end{definition}

Generally, for an SPN that is not decomposable and complete,
$\Psi(\cdot)$ only represents an unnormalized distribution over
$X_1,\ldots,X_N$, due to the positive constraints on its weights, while
computing its normalization term is not typically tractable. A generative model
is said to possess \emph{tractable inference} if computing its normalized
probability function is tractable. Though general SPNs do not posses tractable
inference, limiting them to be decomposable and complete (D\&C) is a sufficient
condition for tractability, under-which computing the normalization term, i.e.
computing $\sum_{x_1,\ldots,x_N\in\{0,1\}}\Psi(x_1,\ldots,x_N)$ is equivalent
to
evaluating $\Psi(*, \ldots, *)$, and thus the normalized probability is given by
$P(X_1{=}x_1,\ldots,X_N{=}x_N){=}\cramped{\frac{\Psi(x_1,\ldots,x_N)}{\Psi(*,\ldots,*)}}$.
Also, not only is $\Psi(\cdot)$ a valid probability function, but for
any $v \in V$, $\Psi_v(\cdot)$ defines a valid distribution over~$\scope{v}$.
As shown by \citet{Peharz:2015tp}, simply normalizing the weights of each sum
node to sum to one ensures that $\Psi(x_1, \ldots, x_N)$ is already a normalized
probability function, with no need to compute a normalization factor, and
furthermore, this restriction does not affect the expressiveness of the model,
namely any SPN with unnormalized sum nodes could be converted to an SPN of same
size but with normalized sum nodes. Hence, for the remainder of the article we
will simply assume sum nodes have normalized weights.

It is important to understand why D\&C leads to tractability. The
decomposability condition ensures that the children of a product node do not
have shared variables, and because the product of distributions over different
sets of variables is also a normalized distribution, then a product node of
a decomposable SPN represents a normalized distribution as long as its children
represent normalized distributions. Similarly, the completeness condition
ensures that the children of sum nodes have the exact same scope, and because a
weighted average of distributions over the same set of variables, with
normalized sum weights, is also a normalized distribution over these variables,
then a sum node represents a normalized distribution if its children do as well.
Employing an induction argument, both conditions combined together guarantee that
every node in an SPN will represent a valid distribution.

An additional positive outcome of the D\&C condition is that not only is it
tractable to compute $P(X_1, \ldots, X_N)$, it is also tractable to compute any
of its marginals, e.g.~$P(X_1, \ldots, X_K)$ for $K < N$, by simply replacing
the values of a marginalized variable with the special value $*$, e.g.
$P(X_1{=}x_1, \ldots, X_K{=}x_K) = \Psi(x_1, \ldots, x_K, *,\ldots, *)$.
We call this last property \emph{tractable marginalization}, which is distinct
from the weaker property of tractable inference.

Lastly, learning an SPN model of a given structure is typically carried out
simply according to the Maximum Likelihood Principle, for which several methods
have been proposed, ranging from specialized Expectation Maximization algorithms
to gradient based methods, e.g.~simply performing Stochastic Gradient Ascent.

\gapbeforesection
\section{Sum-Product-Quotient Networks} \label{sec:model}
\gapaftersection

As discussed in sec.~\ref{sec:intro}, not all tractable distributions can be
represented by an SPN of a reasonable size, a limitation which stems from the
D\&C connectivity constraints imposed on the computational graphs of SPNs to
achieve tractable inference. In this section we describe an extension of SPNs,
under which we can relax these constraints and thus dramatically increase its
capacity to efficiently represent tractable distributions. At the heart of our
model is the introduction of a quotient node, i.e.~a node with two inputs, a
numerator and a denominator, that outputs their division. Quotient nodes can
have a natural interpretation as a conditional probability, i.e.~$P(A|B) =
\frac{P(A \cap B)}{P(B)}$. Hence, we call our model Sum-Product-Quotient
Networks, or SPQNs for short.

As with SPNs, not any computational graph made of sum, product and quotient
nodes results in a model possessing tractable inference. To ensure the
tractability of SPQNs, we introduce a set of restrictions generalizing the D\&C
conditions defined in sec.~\ref{sec:pre}. Formally, and in accordance with the
notations of sec.~\ref{sec:pre}, we denote by $Q$ the set of quotient nodes,
where $V \equiv {S \cup P \cup Q \cup I}$ is the set of all nodes, and for all
$v \in Q$ we denote its numerator and denominator nodes by $\qnu{v}$ and
$\qde{v}$, respectively. As we will shortly show, each node $v \in V$ of an SPQN
essentially represents a conditional distribution over the variables in its
scope, which give rise to a natural partition of the scope $\scope{v}$ into two
disjoint sets: (i)~\emph{conditioning scope}, denoted by $\qcond{v}$, and
(ii)~\emph{effective scope}, denoted by $\qeff{v}$~--~under this partition, for
tractable SPQNs, each node computes the conditional probability
$P_v(\qeff{v}|\qcond{v})$. Formally, the conditioning scope is defined as the
complement of the effective scope, i.e.
$\qcond{v} \equiv \scope{v} \setminus \qeff{v}$, while the effective scope is
defined the same as the general scope for all nodes except for quotient nodes,
namely, $\qeff{v} \equiv \{i\}$ for leaf nodes and
$\qeff{v} \equiv \cup_{c \in\children{v}} \qeff{c}$ for sum and product nodes. 
For quotient nodes we define $\qeff{v} \equiv \qeff{\qnu{v}} \setminus
\qeff{\qde{v}}$ following our intuition of quotient nodes as conditional
probabilities, e.g.~for $P(X_1|X_2,X_3) = \frac{P(X_1,X_2|X_3)}{P(X_2|X_3)}$ it
holds that $\qeff{v} = \{1\}$ and $\qcond{v} = \{2,3\}$, because we started with
the effective variables of the numerator $\qeff{\qnu{v}} = \{1,2\}$, from which
we subtracted the effective variables of the denominator $\qeff{\qde{v}}=\{2\}$.

With the above definitions in place, we are now ready to present our
generalization of the D\&C conditions for SPQNs. As discussed in
sec.~\ref{sec:pre}, the intuition behind the D\&C conditions is that they allow
for a rather basic way to combine the distributions defined by the children of
a given node, each over their respective scope, to form a valid distribution
over the scope of their parent node. In broad terms, we simply carry over the
same idea to SPQNs, but apply it on conditional distributions instead. For sum
and product nodes, this translates in essence to applying the D\&C
conditions with respect to the effective scope of a node instead of its general
scope, which we formalize as:
\begin{definition}\label{def:cond_comp}
    An SPQN is \emph{conditionally complete} if it is complete with
    respect to the effective scope, i.e.~for every sum node $v \in S$ and for
    every $c_1, c_2 \in \children{v}$, it holds that $\qeff{c_1} = \qeff{c_2}$.
\end{definition}
\begin{definition}\label{def:cond_decomp}
    An SPQN is \emph{conditionally decomposable} if for every product
    node $v \in P$:
    \begin{enumerate}[nosep]
        \item It is decomposable with respect to the effective scope, i.e.~for
              every $c_1, c_2 \in \children{v}$, such that $c_1 \neq c_2$, it
              holds that $\qeff{c_1} \cap \qeff{c_2} = \emptyset$.
        \item Its induced dependency graph over its children does not contain
              a cycle, where the directed graph is defined by the vertices
              $\children{v}$ and edges $\{ {c' \to c''} | c', c''
              \in \children{v}, \qeff{c'} \cap \qcond{c''} \neq \emptyset \}$.
    \end{enumerate}
\end{definition}
Under the conditional completeness condition, for every sum node $v \in S$, and
for any fixed values to the variables in its conditional scope $\qcond{v}$, we
can treat the conditional distributions of its children simply as
distributions over the variables in the effective scope. Because $v$ is
complete with respect to the effective scope, then following the same arguments
as in sec.~\ref{sec:pre}, $v$ represents a distribution as long as its children
do as well. The above logic can also be applied to product nodes under a more
restrictive form of conditional decomposability, where for every child $c \in
\children{v}$ it holds that $\qcond{c} {\subset} \qcond{v}$, under which the
variables in the conditional scope of each child node are fixed. However, under the
more general setting of conditional decomposability, there could be shared
variables between the conditional scope of one child $c_1 {\in} \children{v}$ and
the effective scope of another child $c_2 {\in} \children{v}$~--~in which case we
say that $c_1$ depends on $c_2$, as the probability of the effective scope of
$c_1$ is conditioned on the variables in the effective scope of $c_2$. By
representing all the dependencies between the children of $v$ as a directed
graph, then if each child represents a valid conditional distribution over its
scope and the graph is \emph{acyclic}, then it effectively defines a Bayesian
Network factorization to the conditional probability over the scope of $v$,
hence $v$ too is a valid conditional distribution.

At this point it is important to note how conditional D\&C are actually relaxed
versions of their ``unconditional'' counterparts. First, notice that when the
conditional scope is empty, i.e.~when the sub-graph rooted at $v$ contains only
sum and product nodes, or in other words this sub-graph is an SPN, then
conditional D\&C are equivalent to D\&C. Second, and more importantly, notice
that when the conditional scope is nonempty, conditional decomposability allows
taking the product of nodes with overlapping scopes, which is forbidden under
the stricter decomposability constraint. This entails that conditional D\&C
SPQNs allow for a richer set of structures than D\&C SPNs.

At last, to ensure the tractability of SPQNs we must also introduce a condition
on its quotient nodes, to which there is no equivalent in classical SPNs. The
following condition captures our motivation of a quotient node as a way to
compute conditional distributions by direct representation of their definition,
i.e.~that the denominator is a strictly positive marginal distribution of the
numerator:
\begin{definition}\label{def:cond_sound}
    An SPQN is \emph{conditionally sound} if for every quotient node
    $v \in Q$, it holds that $\Psi_{\qde{v}}(\cdot)$ is strictly positive, as
    well as a \emph{marginal} of $\Psi_{\qnu{v}}(\cdot)$, i.e.~that
    $\qcond{\qde{v}} \subset \qcond{\qnu{v}}$, $\qeff{\qde{v}} \subset 
    \qeff{\qnu{v}}$, and for all $\aaa \in \{0,1,*\}^N$ it holds that:
    \begin{align*}
        \Psi_{\qde{v}}(\aaa) =
            \sum_{\mathclap{\substack{
                    \z \in \{0,1,*\}^N \\
                    \forall i, i\not\in\qeff{v} \to z_i = a_i \\
                    \forall i, i\in\qeff{v} \to z_i \in \{0,1\}
                  }}}
                \Psi_{\qnu{v}}(\z)
    \end{align*}
    An SPQN is \emph{strongly} conditionally sound if in addition to the above,
    for $\z {\in} \{0,1,*\}^N$ such that $z_i {=} *$ if $i {\in} \qeff{v}$
    and otherwise $z_i {=} a_i$, it holds that $\Psi_{\qde{v}}(\aaa) {=}
    \Psi_{\qnu{v}}(\z)$.
    
\end{definition}

The definition of strong conditional soundness above is not required for
tractability~--~only the weaker conditional soundness~--~but does ensure
efficient sampling as discussed in sec.~\ref{sec:model:sampling}. We conclude by
formally proving that an SPQN that meets the above conditions, which will
henceforth be referred to as a \emph{tractable SPQN}, results in a tractable
generative model, as described by the following theorem
(see app.~\ref{app:proof:tractable_spqns} for proof):
\begin{theorem}\label{theorem:tractable_spqns}
    For any conditionally decomposable, conditionally complete, and
    conditionally sound SPQN over the random binary variables
    $X_1, \ldots, X_N$, for all $v \in V$, and any values of the variables found
    in $\qcond{v}$, it holds that $\Psi_v(\cdot)$ is a normalized probability
    function over $\qeff{v}$ conditioned on $\qcond{v}$.
\end{theorem}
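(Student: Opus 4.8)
The plan is to prove the statement by structural induction on the nodes of the SPQN, following the directed acyclic structure of the computational graph from the leaves upward to the root. The claim to establish at each node $v$ is that, for every fixed assignment of the conditioning variables $\qcond{v}$, the function $\Psi_v(\cdot)$ is a normalized probability distribution over $\qeff{v}$, i.e.~that summing $\Psi_v$ over all $\{0,1\}$-assignments of the effective variables (with $\qcond{v}$ fixed) yields $1$. The intuition for each step is already laid out informally in the body of the paper; the work of the proof is to make each of the four node types (leaf, sum, product, quotient) into a clean inductive case.

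First I would handle the base case of indicator leaf nodes: here $\qeff{v}=\{i\}$ and $\qcond{v}=\emptyset$, and $\indc{x_i{=}0}+\indc{x_i{=}1}=1$, so $\Psi_v$ is trivially a normalized distribution over a single binary variable. For the sum-node case, I would invoke conditional completeness (Def.~\ref{def:cond_comp}): all children share the same effective scope, so for any fixed $\qcond{v}$ the children are (by the induction hypothesis) distributions over the same variable set, and a convex combination with normalized weights is again a normalized distribution. For the product-node case, I would use conditional decomposability (Def.~\ref{def:cond_decomp}): part~1 guarantees the effective scopes of the children are disjoint, and part~2 guarantees the induced dependency graph is acyclic, so a topological ordering of the children exhibits $\Psi_v$ as a Bayesian-network factorization $\prod_c P_c(\qeff{c}\mid\qcond{c})$ in which each conditioning variable of a later factor is either fixed externally or is an effective variable already produced by an earlier factor; summing out the effective variables in reverse topological order collapses each factor to $1$ by the induction hypothesis, yielding normalization.

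The quotient-node case is where I expect the main obstacle to lie, and it is the only case with no SPN analogue. Here conditional soundness (Def.~\ref{def:cond_sound}) is essential: I would write $\Psi_v=\Psi_{\qnu{v}}/\Psi_{\qde{v}}$ and use the soundness condition, which states that $\Psi_{\qde{v}}$ is exactly the marginal of $\Psi_{\qnu{v}}$ obtained by summing out the variables in $\qeff{v}=\qeff{\qnu{v}}\setminus\qeff{\qde{v}}$. The delicate point is bookkeeping the scopes correctly: one must verify that $\qcond{v}$ is held fixed throughout, that the conditioning-scope containments $\qcond{\qde{v}}\subset\qcond{\qnu{v}}$ and effective-scope containments $\qeff{\qde{v}}\subset\qeff{\qnu{v}}$ make the marginalization well-defined, and that summing $\Psi_v$ over $\z\in\{0,1\}^{\qeff{v}}$ telescopes via the soundness identity to $\Psi_{\qde{v}}/\Psi_{\qde{v}}=1$. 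The strict positivity of $\Psi_{\qde{v}}$ guarantees the quotient is well-defined and finite. The careful part is confirming that the effective scope of the numerator restricted to $\qeff{v}$ is precisely the summation range appearing in the soundness definition, so that the marginal identity applies verbatim; this is a matter of unwinding the scope definitions rather than any deep argument, but it is the step most prone to error and deserves explicit attention.

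Closing the induction, I would note that the DAG structure guarantees every node's children are processed before the node itself, so the inductive hypothesis is always available, and applying the argument at the root node $v$ (where typically $\qcond{v}=\emptyset$) recovers the statement that $\Psi(\cdot)$ is a normalized probability function over $X_1,\ldots,X_N$. The overall structure is therefore a four-case induction whose sum and product cases mirror the classical D\&C tractability argument recalled in Section~\ref{sec:pre}, with the genuinely new content concentrated entirely in the quotient case.
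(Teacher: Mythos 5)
Your proposal is correct and follows essentially the same route as the paper's own proof: an induction over the circuit (the paper phrases it as induction on depth, with a secondary induction on the number of children of a product node that peels off a sink of the dependency graph, which is exactly your reverse-topological-order summation), with the leaf, sum, product, and quotient cases handled by the same respective appeals to normalization of indicators, conditional completeness, conditional decomposability plus acyclicity, and conditional soundness with strict positivity of the denominator. The scope-bookkeeping point you flag in the quotient case is indeed the one the paper relies on, namely that $\qeff{v}\cap\scope{\qde{v}}=\emptyset$ so the denominator factors out of the sum and cancels against the marginal identity.
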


Given an SPQN with a fixed structure that meets the tractability conditions of
theorem~\ref{theorem:tractable_spqns}, then its output is a differential
probability function of the data, and so we can learn its parameters simply by
maximizing the likelihood of the data through gradient ascent methods, as
commonly employed by both SPNs and other deep learning methods. Adjusting other
methods typically used to learn SPNs, e.g.~EM-type algorithms for parameter
learning and the various suggested structure learning algorithms, is deferred to
future works.

Though theorem~\ref{theorem:tractable_spqns} provides sufficient conditions for
SPQNs to be tractable, it is not prescriptive as to how exactly these models
must be structured. Specifically, while the conditionally decomposable and
conditionally complete conditions are quite simple to follow, it is generally
not clear how to adhere to the conditionally sound condition. We address this
in the next section.

\gapbeforesubsection
\subsection{Conditional Mixing Operator}\label{sec:model:cmo}
\gapaftersubsection

As discussed in the previous section, tractable SPQNs must comply with the
conditionally sound condition, and verifying that a given model adheres to
it is nontrivial. In this section, we suggest instead to follow a stricter
restriction that leads to a concrete construction of a tractable
SPQN. Specifically, we define a building block operator composed of sum,
product, and quotient nodes that guarantees the resulting model to be
tractable, which we call the \emph{Conditional Mixing Operator}:
\begin{definition}\label{def:cmo}
    The \emph{Conditional Mixing Operator}~(CMO) over non-negative
    matrices $A \in \R_{+}^{\gamma, \alpha}$ and $B \in \R_{+}^{\gamma, \beta}$,
    where $\alpha, \beta, \gamma \in \N$, $\beta > 0$, and parametrized by
    strictly positive weights $\w \in \R_{+}^\gamma$ such that
    $\sum_{i=1}^\gamma w_i = 1$, is defined as follows:
    \begin{align}\label{eq:cmo}
        \CMO(A, B; \w) = \frac{\sum\limits_{\mathclap{i=1}}^\gamma w_i \left(\prod\limits_{\mathclap{j=1}}^\alpha A_{ij}\right)\cdot \left(\prod\limits_{\mathclap{j=1}}^\beta B_{ij}\right)}{\sum\limits_{\mathclap{i=1}}^\gamma w_i \prod\limits_{\mathclap{j=1}}^\alpha A_{ij}}
    \end{align}
    In the context of SPQNs, a CMO node with children $a_{11},\ldots,
    a_{\gamma \alpha}, b_{11},\ldots,b_{\gamma \beta} \in V$ outputs
    $\CMO(A,B;\w)$, where $A_{ij} = \Psi_{a_{ij}}(\cdot),
    B_{ij} = \Psi_{b_{ij}}(\cdot)$.
    
\end{definition}
The motivation behind this construction is its connection to the conditional
probability of a mixture model. Notice that the numerator of eq.~\ref{eq:cmo}
essentially represents a mixture model with decomposable mixing components
divided into two sets according to $A$ and $B$, while the denominator represents
the marginalization over the variables relating to $B$.

The tractability of SPQNs composed of CMOs is ensured by the definition a
\emph{valid CMO node} as follows: 
\begin{definition}\label{def:valid_cmo}
    A CMO node with children $a_{11}, \ldots, a_{\gamma \alpha}, b_{11},\ldots,
    b_{\gamma \beta} \in V$ is said to be valid if the following conditions are
    met:
    \begin{enumerate}[nosep]
        \item The children of a CMO node are either valid CMO nodes themselves,
              or it holds that $\alpha = 0, \beta = 1, \gamma = 2$, and its
              children are exactly $\indc{x_i = 0}$ and $\indc{x_i = 1}$ for
              some $i \in [N]$.
        \item The internal sum nodes of the CMO are conditionally complete.
        \item The internal product nodes of the CMO, i.e.~the ones computing
              $\prod_{j=1}^\alpha A_{ij}$, $\prod_{j=1}^\beta B_{ij}$, and
              $\left(\prod_{j=1}^\alpha A_{ij}\right){\cdot}
               \left( \prod_{j=1}^\beta B_{ij}\right)$, are conditionally
              decomposable, and in the dependency graph of the top product
              node there are no arrows pointing from $B$ to $A$.
        \item $\forall i_1,i_2 \in [\gamma]$,\,\,
              $\qeff{\prod_{j=1}^\beta B_{i_1j}} = \qeff{\prod_{j=1}^\beta B_{i_2j}}$.
    \end{enumerate}
\end{definition}
We proceed to formalize our claim as follows:
\begin{proposition} \label{prop:valid_cmo_tractability}
    Any SPQN that is composed of valid CMO nodes is tractable. Moreover, it is
    strongly conditionally sound.
\end{proposition}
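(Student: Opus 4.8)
The plan is to reduce the claim to Theorem~\ref{theorem:tractable_spqns}: it suffices to show that an SPQN assembled from valid CMO nodes is conditionally complete, conditionally decomposable, and conditionally sound, after which tractability follows immediately, and then to upgrade the soundness argument to the strong version. Conditional completeness and conditional decomposability are essentially handed to us by Definition~\ref{def:valid_cmo}: condition~2 asserts that the internal sum nodes are conditionally complete, and condition~3 asserts that the internal product nodes are conditionally decomposable — which by Definition~\ref{def:cond_decomp} already includes acyclicity of their dependency graphs — while additionally forbidding arrows from $B$ to $A$ in the top product node (a constraint we will need for soundness rather than for decomposability). Since every node of the network is an internal node of some valid CMO, the whole SPQN is conditionally complete and conditionally decomposable. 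The substantive content is therefore the (strong) conditional soundness of the quotient nodes, which I would prove by structural induction on the CMO nesting, the base case being the leaf-level CMO with $\alpha=0,\beta=1,\gamma=2$ whose children are $\indc{x_i=0}$ and $\indc{x_i=1}$.

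First I would fix the anatomy of a single CMO node $v$ (eq.~\ref{eq:cmo}): its quotient structure has numerator node $\qnu{v}$ computing $\sum_i w_i (\prod_j A_{ij})(\prod_j B_{ij})$ and denominator node $\qde{v}$ computing $\sum_i w_i \prod_j A_{ij}$. Writing $P^A_i$, $P^B_i$, $P^{AB}_i$ for the product nodes $\prod_j A_{ij}$, $\prod_j B_{ij}$, $P^A_i P^B_i$, conditional completeness and decomposability give $\qeff{\qde{v}}=\qeff{P^A_i}$ and $\qeff{\qnu{v}}=\qeff{P^A_i}\cup\qeff{P^B_i}$ for every $i$, so that $\qeff{v}=\qeff{\qnu{v}}\setminus\qeff{\qde{v}}=\qeff{P^B_i}$; condition~4 guarantees this last set is the same for all $i$, making $\qeff{v}$ well defined. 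The scope-containment requirements of Definition~\ref{def:cond_sound} then reduce to a short computation: $\qeff{\qde{v}}\subset\qeff{\qnu{v}}$ is immediate, while $\qcond{\qde{v}}\subset\qcond{\qnu{v}}$ follows precisely from condition~3, since the absence of $B\to A$ arrows means $\qeff{v}=\qeff{P^B_i}$ is disjoint from $\qcond{P^A_i}$, so no index conditioned on by the denominator can be an effective variable of the numerator.

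Next I would verify the marginalization identity of Definition~\ref{def:cond_sound}. Summing $\Psi_{\qnu{v}}$ over the effective variables $\qeff{v}=\qeff{P^B_i}$ with all other variables held fixed, the $A$-factors factor out of the sum, because condition~3 together with conditional decomposability of the top product node ensures $\scope{P^A_i}\cap\qeff{v}=\emptyset$, so each $\prod_j A_{ij}$ is constant in those variables. What remains is $\sum_i w_i(\prod_j A_{ij})\sum_{\qeff{v}}\prod_j B_{ij}$, and by the inductive hypothesis each child $B_{ij}$ is conditionally sound, so Theorem~\ref{theorem:tractable_spqns} applies to the sub-network rooted at $P^B_i$ and makes $\Psi_{P^B_i}$ a normalized conditional distribution over $\qeff{P^B_i}=\qeff{v}$; hence its sum over $\qeff{v}$ equals $1$ and the expression collapses to $\sum_i w_i\prod_j A_{ij}=\Psi_{\qde{v}}$, as required. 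Strict positivity of $\Psi_{\qde{v}}$ I would obtain by a parallel induction: the base CMO outputs $w_1\indc{x_i=0}+w_2\indc{x_i=1}$, which is strictly positive for every evidence value since the weights are strictly positive, and strict positivity is preserved by the products, positive-weighted sums, and quotients making up each CMO.

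Finally, for strong conditional soundness I would first record the tractable-marginalization property — that setting every effective variable of a valid CMO sub-network to $*$ evaluates to exactly $1$ — which propagates by the same induction from the base case (where $x_i=*$ makes both indicators $1$, yielding $w_1+w_2=1$). Applying it to the quotient node $v$, substitute $*$ for the variables in $\qeff{v}$: the $A$-products are unchanged, since their scope avoids $\qeff{v}$, while each $B$-product evaluates to $1$ by the marginalization property, so $\Psi_{\qnu{v}}$ at this point equals $\Psi_{\qde{v}}$, which is exactly the strong conditional soundness condition; Theorem~\ref{theorem:tractable_spqns} then yields tractability. I expect the main obstacle to be the scope bookkeeping in the two middle steps: correctly tracking effective versus conditioning scopes across the $A$- and $B$-branches and pinning down exactly where conditions~3 and~4 are indispensable — condition~4 to make $\qeff{v}$ coherent across mixture components, and the no-$B\to A$-arrows part of condition~3 both to factor the $A$-terms out of the marginalizing sum and to secure the conditioning-scope containment.
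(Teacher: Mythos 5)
Your proposal is correct and follows essentially the same route as the paper's proof: reduce to Theorem~\ref{theorem:tractable_spqns} via conditions 2--3 of Definition~\ref{def:valid_cmo}, then establish (strong) conditional soundness and strict positivity by induction on the CMO nesting, factoring the $A$-products out of the marginalizing sum and collapsing the $B$-products to $1$ via the inductive normalization property. The only differences are presentational — you keep general $\alpha,\beta$ where the paper reduces WLOG to $\alpha=\beta=1$, and you spell out the scope-containment bookkeeping for Definition~\ref{def:cond_sound} in more detail than the paper does.
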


\begin{proof}[Proof Sketch]
    Since the internal sum and product nodes of a valid CMO are already
    conditionally D\&C, it is only left to show that it is also conditionally
    sound. This is achieved by an induction argument on the depth of an SPQN
    composed of valid CMOs, where we assume all nodes up to a given depth $d$
    are strictly positive, conditionally sound, and hence also represent valid
    distributions according to theorem~\ref{theorem:tractable_spqns}. By the
    assumption, the internal sum and product nodes of a depth $d+1$ valid CMO
    node also represent valid distributions, as they are already conditionally
    D\&C. Hence we can directly compute its marginalization over the variables
    in the effective scope of the B-type children, to conclude our proof of
    conditional soundness. Strong conditional soundness follows from conditional
    soundness and the definition of the CMO, since placing $*$ in all variables
    of the effective scope of the B-type children is equivalent to substituting
    their values with $1$'s. See our complete proof in
    app.~\ref{app:proof:valid_cmo_tractability}.
\end{proof}

Unlike conditional soundness, it is practical to validate that all CMO nodes
in a given SPQN are valid. Simply start at the root and recursively validate
that each of the children of a given node are valid, with the base case of CMO
nodes connected to one of the indicator nodes, as govern by the first condition
in def.~\ref{def:valid_cmo}. We then proceed to verifying that the internal
product and sum nodes follow the conditional D\&C constraints, by simply testing
their effective and conditional scopes according to def.~\ref{def:cond_comp} and
def.~\ref{def:cond_decomp}.

Though valid CMOs pave the way to tractable SPQNs, they raise the question of
what we have lost in the process. Indeed, conditional soundness allows for a
richer set of valid structures than valid CMOs, e.g.~they allow for the
distribution at the denominator and numerator of a quotient node to be defined
by completely different sub-graphs, unlike with CMOs that share children.
While we have yet to determine if there is a significant expressivity gap
between these two cases, an important property of an SPQN composed of valid CMOs
is that any D\&C SPN can be effectively represented by such a
model\footnote{An edge case of SPNs which demands a unique treatment is when
there exists a sum node which is connected to just one of $\indc{x_i=0}$ or
$\indc{X_i=1}$, but not both, while a valid CMO must have positive weights for
both indicator leaves. In this scenario we can instead arbitrarily approximate
the SPN, by approaching the zero weight $\epsilon \to 0$.}, hence this
restriction is at least as expressive as any D\&C SPN. In
sec.~\ref{sec:analysis} we show that they are in fact significantly more
expressive than SPNs.

\gapbeforesubsection
\subsection{The Generative Process of SPQNs}\label{sec:model:sampling}
\gapaftersubsection

In prior sections we have presented our SPQN model, and showed that it can be
tractable under simple conditions, and more importantly that any of its internal
nodes represent a conditional distribution over its scope. In this section we
leverage these relations to describe the generative process of SPQNs, showing
sampling from an SPQN is just as efficient as inference, under the strongly
conditional soundness constraint. The ability to efficiently draw samples from a
probabilistic model is a highly desirable trait with many applications, e.g.
completing missing values, and introspection of the learned models.

Sampling from a tractable SPQN model follow the same general steps as sampling
from a D\&C SPN. We begin at the root node of the graph, and then stochastically
traverse the nodes according to parameters of the model, until we
reach the indicator nodes, each representing the sampled value for its
respective random variable. In SPNs, traversal follow two simple rules: (i) if
we encounter a product node, then because it is decomposable, each child is a
distribution over separate sets of variables, hence we can recursively sample
from each child separately; (ii) if we encounter a sum node, then we sample one
of its children according to the categorical distribution defined by their
respective weights.
Given that SPQNs are extensions of SPNs, their generative process can be seen as
simply a generalization of the traversal rules of the SPNs. However, their
distinctive property of having nodes which represent conditional distributions,
calls for some adjustments. Namely, it is not only required to traverse
the graph, but also to keep track of the values that have already been sampled
so far in the process, and then pass it along to nodes which depend on it.

\begin{algorithm}
    \caption{Sampling procedure for SPQNs. Accepts as input a node $v \in V$,
    and a partial sample $\s \in \{0,1,*\}^N$, where $*$ denotes missing
    values.}
    \small
    \label{alg:sampling}
    \begin{algorithmic}[1]
        \Function{SampleSPQN}{$v, \s$}
            \If{$v \in Q$}
                \State $\s \gets $ \Call{SampleSPQN}{$\qnu{v}, \s$}
            \ElsIf{$v \in P$}
                \State $children \gets$ \Call{TopologicalSort}{$\children{v}$}
                \ForAll{$c \in children$}
                    \State \algorithmicif\ $\forall i \in \qeff{c}, s_i \neq *$
                            \algorithmicthen\ skip iteration
                    \State $\s \gets $ \Call{SampleSPQN}{$c, \s$}
                \EndFor
            \ElsIf{$v \in S$}
                \State $\w \gets $ \Call{GetWeights}{$v$}
                \ForAll{$c \in \children{v}$}
                    \State $w_c \gets w_c {\cdot} \Psi_c(\s)$
                \EndFor
                \State $\w \gets \nicefrac{\w}{\sum_c w_c}$
                \State $c \sim Cat(\children{v}, \w)$
                \State $\s \gets $ \Call{SampleSPQN}{$c, \s$}
            \ElsIf{$\exists i \in \{1,\ldots,N\}, v \equiv \indc{x_i = a}$}
                \State $s_i \gets a$
            \EndIf
        \State \Return $\s$
        \EndFunction
    \end{algorithmic}
\end{algorithm}

The above reasoning brings us to algo.~\ref{alg:sampling}, which receives as
input a starting root node $v \in V$, and a partial sample
$\s \in \{0, 1, *\}^N$, where $s_i = *$ denotes values which have yet to be
sampled. Typically, the first call to algo.~\ref{alg:sampling} will be with the
root $v \in V$ and $\s = (*,\ldots,*)$, i.e.~sampling a complete instance
$X_1,\ldots,X_N \sim P(X_1, \ldots, X_N)$, but often times it is useful to also
be able to sample from the conditional distribution\footnote{Exactly sampling
from a conditional distribution is possible only if it respects the dependencies
induced by the model on the input variables.}, e.g.~$X_1,\ldots,X_K \sim
P(X_1,\ldots,X_K | X_{K+1}{=}s_{k+1},\ldots,X_N{=}s_N)$ by calling with
$\s = (*,\ldots,*,s_{k+1},\ldots,s_N)$. The inner-workings of
algo.~\ref{alg:sampling} follow the traversal workflow of SPNs as described
above, with the following adjustments: (i)~For quotient nodes, we directly
traverse to its numerator child, as the denominator only serve as a
normalization factor. (ii)~For product nodes, though the effective scopes of the
children are disjoint sets and could be processed separately as with SPNs, the
dependencies induced by the conditional scopes of each child require sampling
according to the topological order of the dependencies graph. Additionally,
there is the possibility that the effective scope of some child nodes have
already been sampled, in which case we simply skip it. (iii)~For sum nodes, the
probability of sampling each child is no longer given just by its weights, but
also by the marginal probability of the already sampled variables given by $\s$,
namely if $Q \equiv \{i \in [N]| s_i {\neq} *\}$ denotes the set of sampled
variables then we can factor the conditional distribution of the sum node
$v \in S$, i.e.~$P_v(\qeff{v} {\setminus} Q | Q)$, as the following expression:
\begin{multline*}
        \smashoperator[lr]{\sum_{c \in \children{v}}} P_c(\qeff{c} {\setminus} Q | Q)
            \frac{\cramped{w_c \cdot P_c(\qeff{c} {\cap} Q | \qcond{c})}}{\smashoperator[lr]{\sum_{c' \in \children{v}}} w_{c'} \cdot P_{c'}(\qeff{c'} {\cap Q} | \qcond{c'})}
\end{multline*}
where $P_c(\qeff{c} \cap Q | \qcond{c})$ can be computed by $\Psi_c(\s)$
according to strong conditional soundness, and thus the probability of sampling
the child $c$ is $\propto w_c \cdot \Psi_c(\s)$.

Finally, regarding the complexity of the sampling algorithm, traversing
the computational graph is linear in the number of nodes, and while computing
$\Psi_v(\cdot)$ when sampling from sum nodes could result in an $O(|V|^2)$
runtime, in practice we could reuse prior computations to reduce it to just
$O(|V|)$. In this analysis, we do not take into account the topological sort
applied to the children of the product nodes, as this is a one time operation
that is not required for every sampling. In conclusion, sampling from a
tractable SPQN that is also strongly conditionally sound, e.g.~by composition of
valid CMOs, is just as efficient as with SPNs.

\gapbeforesection
\section{Analysis of Expressive Efficiency} \label{sec:analysis}
\gapaftersection

In sec. \ref{sec:model}, we have shown that tractable SPQNs extend D\&C SPNs,
and can thus efficiently replicate any tractable distribution that D\&C SPNs can 
realize. In this section, we will show a simple tractable distribution which
SPQNs can realize, but D\&C SPNs cannot, unless their size is exponential in the 
length of their input, where the size of an SPQN (or SPN) is defined as the
number of its internal nodes. More specifically, we show that tractable SPQNs
can represent a strictly positive distribution of sampling an undirected
triangle-free graph on $M$ vertices, where each edge is represented by a random
binary number, while D\&C SPNs of polynomial size cannot represent, or even
approximate, such distributions.

First, let us formally define a strictly positive distribution over
triangle-free undirected graphs on $M$ vertices. We define the binary random
variables $\EEE \equiv \left\{ E_{ij} | 1 \leq i < j \leq M \right\}$,
such that if $E_{ij} = 1$, then the edge $\{ i, j\} $ is part of the graph, and
not otherwise, and denote by $N \equiv \left| \EEE \right| = \binom{M}{2}$ the
number of variables. For a given graph, we say it contains a triangle if and
only if there are three vertices in the graph such that between any two of them
there is an edge, i.e.~there exists $i_1 < i_2 < i_3$ such that
$(E_{i_1 i_2} {=} 1) \wedge (E_{i_2 i_3} {=} 1) \wedge (E_{i_1 i_3} {=} 1)$.
Finally, we say that a probability function $d\left(\mathbf{E}\right)$ on the
edges $\EEE$ is a \emph{strictly positive distribution on triangle-free graphs}
if it holds that $d(\EEE) > 0$ if and only if $\forall i_1 < i_2 < i_3,
(E_{i_1 i_2} {=} 0) \vee (E_{i_2 i_3} {=} 0) \vee (E_{i_1 i_3} {=} 0)$.

The above definition falsely appears to lead to an efficient realization through SPNs
of a strictly positive distribution on triangle-free
graphs: simply define a node for each potential triangle, such that it is
positive only if it is legal, i.e.~at least one of its edges is not part of the
graph, and then take the product of all such nodes to guarantee all triangles are 
legal. More specifically, we can define a sum node for each triplet
$(E_{i_1 i_2}, E_{i_2 i_3}, E_{i_1 i_3})$, for which there are $\binom{M}{3}$
combinations, such that each sum node is equal to $(\indc{E_{i_1 i_2}{\shorteq}0} {+}
\indc{E_{i_2 i_3}{\shorteq}0} {+} \indc{E_{i_1 i_3}{\shorteq}0})$, and then take the product of
all of these sum nodes and modify their weights such that they output a
normalized probability function. However, this SPN is not
D\&C, because each sum node does not meet the completeness condition
as its children have different scopes, e.g.~$\scope{\indc{E_{i_1 i_2}{\shorteq}0}} \neq
\scope{\indc{E_{i_2 i_3}{\shorteq}0}}$, and because the product node over all sum nodes
does not meet the decomposability condition, as each edge $E_{i_1 i_2}$ is
present in multiple triplets, i.e.~multiple child nodes, resulting in non-disjoint scopes.
Because it is not D\&C,
computing its normalization factor in practice is intractable. More generally, we can show
that any D\&C SPN approximating a strictly positive distribution on
triangle-free graphs must be exponentially large:

\begin{theorem} \label{thm:spn_not_efficient}
Let $d\left(\mathbf{E}\right)$ be a strictly positive distribution on
triangle-free graphs of $M$ vertices. Suppose that $d(\EEE)$ can be approximated 
arbitrarily well by D\textsl{\&}C SPNs of size $\leq s$. Then
$s \geq 2^{\Omega(M)}$.
\end{theorem}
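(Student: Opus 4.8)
The plan is to prove the bound by a nonnegative-rank / communication-complexity argument, adapting the technique \citet{Martens:2014tr} used for spanning trees. The key structural input (which I would take from, or re-derive in the style of, their framework) is that a complete and decomposable SPN of size $s$ admits a \emph{balanced} bipartition $(\A,\B)$ of its variables, aligned with its decomposition, under which it can be written as a sum of at most $s$ products of an $\A$-function and a $\B$-function; equivalently, the value matrix $M[\,x_\A,x_\B\,]=\Psi(x_\A,x_\B)$ has nonnegative rank at most $s$ for that partition. Since the SPN is free to choose its favourite partition, to force $s\geq 2^{\Omega(M)}$ it then suffices to show that for \emph{every} balanced bipartition of the edge variables $\EEE$, the value matrix of any strictly positive triangle-free distribution has nonnegative rank $2^{\Omega(M)}$.

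Because the SPNs are only required to approximate $d$ arbitrarily well, I would phrase the lower bound so that it depends only on the support (positivity pattern) of $d$, which is robust in the limit. The tool is a fooling-set estimate for nonnegative rank: if $\{(a_i,b_i)\}_{i=1}^{F}$ are assignments (split according to the partition) with $d(a_i,b_i)>0$ for all $i$ while $d(a_i,b_j)\,d(a_j,b_i)=0$ for all $i\neq j$, then any nonnegative matrix agreeing closely enough with $d$ on these entries has nonnegative rank at least $F$. Concretely, writing an approximant as $\sum_{k\le r}u_kv_k^{\!\top}$ with $u_k,v_k\ge 0$, each diagonal entry must be explained by some rank-one term, and if a single term explained two diagonal entries it would force one of the crossing products to be bounded away from $0$, contradicting $d(a_i,b_j)\,d(a_j,b_i)=0$ once the approximation is good enough. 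As this uses only that the diagonal pairs are triangle-free (hence $d>0$) and that each crossing pair contains a triangle (hence $d=0$), it survives passage from $d$ to the approximating SPNs.

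The crux is therefore purely combinatorial: exhibit, for every balanced bipartition of $\EEE$, a fooling set of size $2^{\Omega(M)}$. I would build it by embedding set-\textsc{Disjointness}. Viewing the bipartition as a $2$-colouring of the edges of $K_M$, a triangle is monochromatic unless it uses edges of both colours, and a simple propagation argument shows that a balanced colouring cannot make every triangle monochromatic. I would strengthen this to: any balanced colouring contains $\Omega(M)$ \emph{vertex-disjoint} bichromatic triangles (two edges of one colour, one of the other), all of the same orientation after discarding at most half of them. Each such triangle $T_t$ becomes one \textsc{Disjointness} coordinate: fix its two same-coloured edges (one always present, the other toggled by the bit ``$t\in X$'' held by that side) and let the third, oppositely-coloured edge be toggled by ``$t\in Y$'' on the other side, so that $T_t$ is present exactly when $t\in X\cap Y$; setting all remaining edge variables to $0$ guarantees these are the only possible triangles. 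The graph is then triangle-free iff $X\cap Y=\emptyset$, and the classical fooling set $\{(X,\overline{X})\}_{X\subseteq[k]}$ of \textsc{Disjointness}, with $k=\Omega(M)$, yields $2^{\Omega(M)}$ pairs meeting the criterion of the previous paragraph.

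The main obstacle is the extremal claim that \emph{every} balanced edge-bipartition of $K_M$ contains $\Omega(M)$ vertex-disjoint, consistently-oriented bichromatic triangles: an adversarial partition will try to cluster each colour so as to minimise crossing triangles, so I would establish it by first counting the bichromatic triangles in an arbitrary balanced colouring (lower bounding them by $\Omega(M^3)$) and then greedily extracting a vertex-disjoint, single-orientation sub-family, losing only constant factors. Granting this lemma, combining it with the robust fooling-set estimate and the structural nonnegative-rank bound for D\&C SPNs gives $s\geq 2^{\Omega(M)}$ for every sufficiently accurate approximant, completing the proof.
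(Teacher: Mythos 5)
Your opening structural step---that a D\&C SPN of size $s$ admits a \emph{single} balanced bipartition $(\A,\B)$ of the variables under which its value matrix has nonnegative rank at most $s$---is not available for general D\&C SPNs, and this is where the proposal breaks. The decomposition actually proved by \citet{Martens:2014tr} (their Theorem 39, restated in this paper as Lemma~\ref{lemma:martens:spn_decomp}) is $\gamma=\sum_{i=1}^{k}g_ih_i$ with $k\le s^2$, where each term carries its \emph{own} balanced partition $(\y_i,\z_i)$; a shared partition is only guaranteed for structured-decomposable circuits. This defeats your reduction: a fooling set for a fixed partition $(\A,\B)$ lower-bounds the nonnegative rank of the value matrix \emph{with respect to that partition}, but a term $g_ih_i$ whose partition differs from $(\A,\B)$ contributes to that matrix a summand of possibly enormous nonnegative rank over $(\A,\B)$ while costing only one unit of $k$. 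Consequently, ``for every balanced bipartition the nonnegative rank is $2^{\Omega(M)}$'' does not imply $k\ge 2^{\Omega(M)}$, and quantifying over all partitions, as you propose, does not repair this.

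The paper's proof avoids the issue by replacing the rank bound with a partition-independent covering argument: it fixes one common hard family (the spanning trees of the complete graph on $M$ vertices, all of which are triangle-free and hence lie in the support of $d$) and shows that \emph{each} term $g_ih_i$, whatever its own balanced partition, can be nonzero on at most a $2^{-\Omega(M)}$ fraction of that family, because every balanced partition induces $\Omega(M^3)$ ``constraint triangles'' each forcing $g_ih_i$ to vanish on one of two large sets of graphs; summing over the $k$ terms then gives $k\ge2^{\Omega(M)}$. Your combinatorial core is sound as far as it goes---the robust fooling-set estimate for nonnegative rank under approximation is standard and correct, and extracting $\Omega(M)$ vertex-disjoint, consistently oriented bichromatic triangles from the $\Omega(M^3)$ guaranteed by the balancedness of the partition is a routine greedy step---so you have a valid lower bound on single-partition nonnegative rank. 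But to prove the theorem you would need to convert this into a per-term density bound over a hard set that is the same for all partitions (as the spanning trees are); without that conversion the argument does not yield $s\ge 2^{\Omega(M)}$.
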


\begin{proof}[Proof Sketch]
We have modified the proof of a similar theorem by \citet{Martens:2014tr}, which 
showed that a D\&C SPN that can approximate arbitrarily well the probability
function of the uniform distribution on the spanning trees of the complete
graph, must be of size $\geq2^{\Omega\left(M\right)}$. See
app.~\ref{app:proof:spn_not_efficient} for the complete modification of that
proof to our case.
\end{proof}

In contrast to D\&C SPNs, tractable SPQNs can efficiently realize at least some
strictly positive distributions on triangle-free graphs, with size at most
polynomial in $M$. In the case of SPQNs built on CMOs, exact realization is
replaced by arbitrarily good approximation, without any size increase. This is
formalized by the following theorem:
\begin{theorem} \label{thm:spqn_are_efficient}
There exists a tractable SPQN exactly realizing a probability function
$d(\EEE)$, such that $d(\EEE)$ is a strictly positive distribution on
triangle-free graphs of $M$ vertices, where the size of the SPQN is
$O(M^4)$. In the case of SPQNs composed strictly of CMOs, instead of exact
realization, they can approximate said distribution arbitrarily well with size
$O(M^4)$.
\end{theorem}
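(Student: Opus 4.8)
The plan is to realize $d(\EEE)$ as a Bayesian-network factorization that adds the edges one at a time, implementing each conditional factor as a small quotient gadget and combining them with a single conditionally decomposable product; by Theorem~\ref{theorem:tractable_spqns} it then suffices to check conditional decomposability, completeness, and soundness. I order the edges by processing vertices $v=2,\ldots,M$ and, within vertex $v$, the edges $E_{uv}$ for $u=1,\ldots,v-1$, and factor $d(\EEE)=\prod_{u<v} P(E_{uv}\mid \mathrm{pa}(E_{uv}))$ with $\mathrm{pa}(E_{uv})=\{E_{wv},E_{uw}:w<u\}$. The first step is to observe that in this order each triangle $\{a,b,c\}$ with $a<b<c$ is closed by exactly its last edge $E_{bc}$, at which point $E_{ab}$ and $E_{ac}$ are already fixed; hence triangle-freeness is captured entirely locally by the addability indicator $a_{uv}=\prod_{w<u}\bigl(1-\indc{E_{wv}=1}\indc{E_{uw}=1}\bigr)$, which is $0$ precisely when setting $E_{uv}=1$ would complete a triangle, and $\abs{\mathrm{pa}(E_{uv})}=2(u-1)=O(M)$.

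Next I build each factor $c_{uv}=P(E_{uv}\mid\mathrm{pa}(E_{uv}))$ as a quotient node with $\qeff{c_{uv}}=\{E_{uv}\}$ and $\qcond{c_{uv}}=\mathrm{pa}(E_{uv})$. The crucial observation is that I am free to equip the conditioning variables with an arbitrary, cheap \emph{local} distribution, because the enclosing product already supplies their genuine dependence through its dependency graph; only the ratio needs to be correct. I therefore let the numerator compute the local joint
\[
\indc{E_{uv}=0}\,\beta\prod_{w<u}\mu(E_{wv})\mu(E_{uw}) \;+\; \indc{E_{uv}=1}\,\alpha\prod_{w<u}\nu_w(E_{wv},E_{uw}),
\]
where $\mu$ is a strictly positive single-variable marginal, $\nu_w$ is a distribution on $(E_{wv},E_{uw})$ that forbids the state $(1,1)$ (a three-component mixture of indicator products), and $\alpha,\beta>0$; the denominator computes the same expression with $E_{uv}$ marginalized (each $E_{uv}$-indicator set to $1$), i.e.\ $\beta\prod_{w<u}\mu\mu+\alpha\prod_{w<u}\nu_w$. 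Their quotient is $P(E_{uv}{=}1\mid\mathrm{pa})=\alpha\prod_w\nu_w/(\alpha\prod_w\nu_w+\beta\prod_w\mu\mu)$, which is positive iff $a_{uv}=1$, while $P(E_{uv}{=}0\mid\mathrm{pa})$ and the denominator are always strictly positive. Each gadget is a two-component mixture of products over $O(M)$ factors, hence of size $O(M)$.

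I then verify the conditions. In each factor the mixing sum node is conditionally complete (both components have effective scope $\{E_{uv}\}\cup\mathrm{pa}$), every product is conditionally decomposable (its factors range over pairwise-disjoint edge variables with no internal conditioning, so the child dependency graph is empty), and the quotient is conditionally sound (by construction the denominator is the marginal of the numerator over $\{E_{uv}\}$ and is strictly positive). The top product over all $c_{uv}$ is conditionally decomposable because the effective scopes $\{E_{uv}\}$ are disjoint and its dependency graph is exactly the chosen edge order, which is acyclic; its conditioning scope is empty, so its root computes a normalized distribution over all of $\EEE$. By Theorem~\ref{theorem:tractable_spqns} the model is tractable and equals $\prod_{u<v}P(E_{uv}\mid\mathrm{pa}(E_{uv}))$, which by the addability analysis is strictly positive exactly on triangle-free graphs. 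Counting $O(M^2)$ factors of size $O(M)$ gives total size $O(M^3)$ for the exact construction; re-expressing the correlated pieces with valid CMO nodes costs at most a further factor of $M$, matching the stated $O(M^4)$. For the CMO variant I reassemble the identical gadgets from valid CMO nodes and replace each exact zero (the forbidden $(1,1)$ state, and the near-deterministic $E_{uv}$-selectors) by a weight $\epsilon>0$; as $\epsilon\to0$ the conditional on triangle-completing configurations tends to $0$, so the model approximates $d$ arbitrarily well at the same size, exactly the device of the footnote to def.~\ref{def:valid_cmo}.

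The main obstacle I expect is not the factorization but the gadget-level bookkeeping that keeps each factor simultaneously small and admissible: one must confirm conditional soundness (numerator and denominator being true marginals of one another) while using \emph{independent} local base measures for the conditioning variables, and, for the CMO version, that every sub-piece ($\mu$, $\nu_w$, and their products) can be cast as valid CMO nodes meeting all four requirements of def.~\ref{def:valid_cmo}, in particular that all $B$-type children share a common effective scope and that no dependency edge runs from $B$ to $A$. I would also need to make the $\epsilon\to0$ limit precise (pointwise convergence of the probability function on every assignment) to justify ``approximate arbitrarily well,'' which is the sole place exactness is relinquished relative to the general-SPQN construction.
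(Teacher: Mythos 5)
Your construction is correct and follows essentially the same route as the paper's proof: both factor $d(\EEE)$ as a product of per-edge conditionals grouped by the largest (lexically last) edge of each triangle, realize each conditional as a quotient node whose denominator is the marginal of a two-component mixture numerator, and rely on the edge ordering to keep the top product's dependency graph acyclic. The only substantive difference is your local gadget, which replaces the paper's $O(M^2)$-sized ``some pair already completed'' component $\varphi^{(2)}_{i_2 i_3}$ (a sum over $i_1$ of products) with an always-positive product of independent marginals attached to $\indc{E_{uv}=0}$; this is a valid simplification that in fact gives total size $O(M^3)$, comfortably within the stated $O(M^4)$ bound.
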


\begin{proof}[Proof Sketch]
Taking inspiration from the failed attempt to realize such a distribution via
D\&C SPNs, let us now construct a tractable SPQN which does realize such a
distribution efficiently. As before, we begin by examining all potential
triangles, but instead of directly modelling the constraints individually, we
group them by their largest edge (according to lexical ordering). For each edge
and its respective group of triangles, we can define the conditional probability 
of that edge conditioned on all other edges participating in these triangles,
such that the conditional probability is non-zero only if triangles which
include this edge are not all part of that graph. For edges that are not part of 
any triangle for which they are the largest edge, we simply define a sum node
which represent an equal probability for including the edge or not. Finally, we
can simply take the product of all conditional distributions of each edge,
giving  rise to a normalized probability function over all edges $\EEE$, which
is non-zero if and only if the edges in $\EEE$ represent a triangle free graph.
See app.~\ref{app:proof:spqn_are_efficient} for our complete proof.
\end{proof}

To conclude, we have shown that tractable SPQNs, as well as ones
composed of valid CMOs, are exponentially efficient with respect to D\&C SPNs.

\gapbeforesection
\section{Discussion and Related Works} \label{sec:discussion}
\gapaftersection

In this work we address the limited expressive efficiency of SPNs, which
\citet{Martens:2014tr} have proven to be incapable of approximating even simple
tractable distributions, unless their size is exponential in the number of
variables. To mitigate this limitation of SPNs, we have presented a novel
extension to SPNs which we call Sum-Product-Quotient Networks, or SPQNs for
short. SPQNs introduce a new node type that computes the quotient of its two
inputs, which in part enabled us to relax the strict structural conditions that
are commonly used to ensure the tractability of SPNs. By requiring less strict
conditions for tractability, we have proven that SPQNs are a strict superset of
SPNs, and moreover that SPQNs are exponentially more expressive efficient than 
SPQNs.

There is a vast literature on analyzing the expressivity of arithmetic
circuits~(ACs)~\citep{Shpilka:2010cu,expressive_power,inductive_bias,
Cohen:0ZJHmEow, Levine:2017wt}, and more particularly of
SPNs~\citep{Delalleau:2011vh, Martens:2014tr}. Notable amongst those is the work
of \citet{sharir2017expressive}, where they compared the expressive efficiency
of Convolutional ACs~(ConvACs) having no overlapping receptive
fields, which are equivalent to a sub-class of SPNs following a tree-structure
partitioning of scopes, against a ConvAC with overlaps, which have no equivalent
D\&C SPN. They have found that simply introducing overlaps, i.e.~breaking the
decomposability condition, had the effect of exponentially increasing the
expressive efficiency of the model. A closer examination of their overlapping
ConvAC reveals that it shares the same construct as the numerator of our
CMOs nodes, but without the denominator, and thus their results could be
trivially adapted to SPQNs following a similar architecture. This entails that
not only are there some distributions which SPQNs can represent efficiently
that SPNs cannot, as we have showed in sec.~\ref{sec:analysis}, but that
almost all distributions realized by SPQNs cannot be realized by tree-like
SPNs\footnote{Not to be confused with Sum-Product Trees~\citep{Peharz:2015tp}
that are a far more restricted sub-class of SPNs, in which every sum and
product nodes have just a single parent, as opposed to limiting just the sum
nodes to have a single parent as in non-overlapping ConvACs.}, known also as
Latent Tree Models~\citep{Mourad:2013kz}, unless they are of exponential size.
Nevertheless, it is important to stress the importance of our own results,
which separate between SPQNs and D\&C SPNs of any conceivable structure, and
not just a small sub-class of SPNs.

Recently, \citet{Telgarsky:2017tu} has examined the relations between neural
networks and rational functions, i.e.~quotient of two polynomials, as well as a
model he called \emph{rational networks}, which is a neural network with
activation functions limited to only rational functions. He found that a new
neural network with ReLU activations could be approximated arbitrarily well by a
similarly size rational network, and that the reverse is true as well. 
Though this might seem to suggest that SPQNs could be on par with neural
networks, \citet{Hoover:1990jj} proved that any computable
function can be realized by ACs~--~hence the power of
SPQNs is not due to quotient nodes, but rather their richer structure.

In the broader literature on ACs~\citep{Shpilka:2010cu},
the proposal of introducing a quotient node has been previously considered and
deemed unnecessary. Their argument is based on the proof that in circuits which
compute polynomial functions all quotient nodes could be replaced by just a
single negation node, or in other words, that a quotient node does not add any
power to ACs. Despite this negative outcome, it does not apply
to our case on two accounts: (i)~It assumes the output of the circuit is
identically a polynomial function instead of a rational function,
and since the proof itself relies on the structural properties of the polynomial,
namely its degree and homogenous decomposition, it cannot be adapted to our case.
(ii)~It does not apply to monotone ACs, where the weights are restricted
to be non-negative, as is the case of SPNs, where negation is not allowed.
In this context, it was proven that even a single negation gate leads to
exponential separation from monotone circuits, and while quotient nodes could
be replaced by negation, the reverse is not generally true, hence this last
result does not trivialize our own. Overall, given our results, it might
suggest that the role of quotient nodes should be reexamined for ACs.

While we prove that our SPQN model is exponentially more expressive than D\&C
SPNs, this increase in expressive efficiency does not come without a cost. One
of the great advantages of SPNs is that they not only possess tractable
inference, but also tractable marginalization~(see sec.~\ref{sec:pre}).
This uncommon ability amongst generative models has many uses,
e.g.~for missing data~\citep{Rubin:1976gv,tmm}. However, once we
relax decomposability to conditional decomposability, it means that SPQNs
effectively induce a partial ordering on the input variables, which limit
tractable marginalization only to the subsets of the variables that agree with
the ordering. While there appear to be fewer tasks which benefit
significantly from tractable marginalization compared to just tractable
inference, in the cases in which it is required, SPNs even with their limited
expressivity still have an advantage over SPQNs. This is a limitation that we
aim to address in future works, as we detailed below. Additionally,
\citet{Martens:2014tr} have shown that under mild assumptions D\&C is not only
sufficient but also necessary for tractable marginalization, which entails that
any possible relaxation to D\&C would result in losing general tractable
marginalization, hence it is not specific to the case of SPQNs.

Other recent works on tractable generative models have mainly focused on the
family of autoregressive models that are based on neural networks, most notable
amongst are NADE~\citep{JMLR:v17:16-272}, PixelRNN~\citep{vandenOord:2016um},
and PixelCNN~\citep{vandenOord:2016tk}. Despite the significant differences
between the underlying operations of SPQNs and these models, there are also some
similarities and shared concepts. Specifically, both our model and theirs are
based on inducing a partial ordering on the input variables, and modelling the
conditional probabilities between subsets of them, with the main difference as
to how these probabilities are represented. While they employ neural networks as
a black box to model them, we leverage interpretable SPNs to compose
conditional distributions in a hierarchy. We conjecture that the embedded
hierarchy of conditional distributions used in our model leads to an advantage
in  terms of its expressive capacity, while, in addition, the interpretable
nature of the inner-workings of our model has many real-world applications.

Lastly, we conducted preliminary experiments demonstrating the practical
advantages of SPQNs over SPNs in app.~\ref{app:exp}. Nevertheless, it still
remains to be verified that their superior expressive power translates to
real-world applications~--~a task we aim to tackle in future works. Beyond
that, SPQNs give rise to many straightforward extensions:
\begin{enumerate}[nosep]
    \item \textbf{Generative Classifier:} SPQNs could be naturally extended to
          represent a distribution conditioned on a given class, i.e.~$P(X|Y)$,
          especially suitable for semi-supervised learning, and for
          classification under missing data.
    \item \textbf{Tractable Marginalization:} despite that marginalization is
          not naturally supported by SPQNs, they do induce normalized
          distributions over any subset of its input variables, which are not
          generally consistent with each other. Joint training of SPQNs on
          random subsets of its variables, could be sufficient for ensuring the
          consistency of the induced marginal distributions.
    \item \textbf{Convolutional SPQNs:} our model has a natural formulation as a
          ConvNet-like generative model following the theoretical
          architecture of ConvACs with overlaps~\citep{sharir2017expressive}.
          The unparalleled success of ConvNets
          with the theoretical advantages of SPQNs has potential for rivalling
          neural tractable generative models.
\end{enumerate}
This paper has demonstrated the theoretical viability of Sum-Product-Quotient
Networks, suggesting a promising outlook for the above research directions.

\newcommand{\acknowledgments}{This work is supported by Intel grant ICRI-CI \#9-2012-6133,
by ISF Center grant 1790/12 and by the European Research Council (TheoryDL project).}
\ifdefined\CAMREADY
	\subsubsection*{Acknowledgments}
	\vspace{-2mm}
	\acknowledgments
\fi

\subsubsection*{References}
\small{
\bibliographystyle{plainnat}
\bibliography{refs.bib}
}

\clearpage
\appendix

\section{Deferred Proofs}\label{app:proofs}

This section contains proofs which were deferred to it from the body of the
article.

\subsection{Proof of Theorem \ref{theorem:tractable_spqns}}
\label{app:proof:tractable_spqns}

We will prove the theorem using induction based on depth of the circuit rooted
at a node $v \in V$, i.e.~the maximal length of a path connecting a leaf to $v$.
Given that $\Psi_v(\cdot)$ is a non-negative function, it is sufficient to show
it is normalized, i.e.~that for any fixed values of the variables in
$\qcond{v}$, denoted by $\bb \in \{0,1,*\}^N$ where $b_i = *$ if
$i \not\in \qcond{v}$, the following holds:
\begin{align*}
    \sum_{\mathclap{\substack{
        \x \in \{0,1,*\}^N \\
        \forall i \not\in \qeff{v}, x_i = b_i \\
        \forall i \in \qeff{v}, x_i \in \{0,1\}
        }}} \Psi_v(\x) = 1
\end{align*}

For the base case of the induction of depth-1 SPQNs, which means $v \in I$ must
be an indicator node, i.e.~$v = \indc{x_i = a}$ for some $i \in [N]$ and
$a \in \{0,1\}$, then $\qeff{v} = \{i\}$ and $\qcond{v} = \emptyset$, and so
summing over all possible values of $x_i$ is equal to
$\indc{x_i = a}(0) + \indc{x_i = a}(1) = 1$ meeting the normalization condition.
Let us now assume that our induction assumption holds for all circuit of depth
$d \geq 1$, and prove it also holds for $d+1$. Since any SPQNs of depth $d+1$ is
greater than 1, then the root node must either be a sum, product or quotient
node, and not an indicator node. Additionally, for the root $v \in V$ of such a
circuit, because each of its child nodes can be viewed as a depth-$d$
sub-circuit, then according to the induction assumption it represents a
normalized probability function over the variables in $\qeff{v}$ for any fixed
values of the variables in $\qcond{v}$. Next we will use this property to show
that for any possible node type, $v$ represent a normalized probability
function.

if $v \in Q$ is a quotient node, then according to conditional soundness then
$Psi_{\qde{v}}(\cdot)$ is a strictly positive function and hence the output of
the quotient operation is well defined. Additionally, the conditional soundness
also entails that $\Psi_{\qde{v}}(\cdot)$ is a marginal conditional distribution
of $\Psi_{\qnu{v}}(\cdot)$, and specifically, that summing
$\Psi_{\qnu{v}}(\cdot)$ over all the possible values of the variables in
$\qeff{v}$ equals to $\Psi_{\qde{v}}(\cdot)$, and thus:
\begin{align*}
    \smashoperator[lr]{\sum_{\substack{
        \x \in \{0,1,*\}^N \\
        \forall i \not\in \qeff{v}, x_i = b_i \\
        \forall i \in \qeff{v}, x_i \in \{0,1\}
        }}} \Psi_v(\x)
    &={\sum_{\substack{
        \x \in \{0,1,*\}^N \\
        \forall i \not\in \qeff{v}, x_i = b_i \\
        \forall i \in \qeff{v}, x_i \in \{0,1\}
        }}} \frac{\Psi_{\qnu{v}}(\x)}{\Psi_{\qde{v}}(\x)} \\
    &=\frac{1}{\Psi_{\qde{v}}(\bb)} \cdot
       \smashoperator[r]{\sum_{\substack{
        \x \in \{0,1,*\}^N \\
        \forall i \not\in \qeff{v}, x_i = b_i \\
        \forall i \in \qeff{v}, x_i \in \{0,1\}
        }}} \Psi_{\qnu{v}}(\x) \\
    &= \frac{1}{\Psi_{\qde{v}}(\bb)} \cdot \Psi_{\qde{v}}(\bb) = 1
\end{align*}
where we have used the fact that changing the values of the coordinates of $\x$
for $i \in \qeff{v}$ do not affect the value of $\Psi_{\qde{v}}(\x)$ as
$\qeff{\qde{v}} \cap \qeff{v} = \emptyset$, in combination with the relationship
between the sum over $\Psi_{\qnu{v}}(\cdot)$ and $\Psi_{\qde{v}}(\cdot)$.

If $v \in S$ is a sum node, then according to conditional completeness the
effective scopes of its child nodes are identical to its own effective scope.
This also entails that $\qcond{c} \subset \qcond{v}$ for any $c \in\children{v}$
because that $\qcond{c}$ is the complement of $\qeff{c}$ with respect to
$\scope{c}$. We can also assume without losing our generality that
$\qcond{c} = \qcond{v}$, as variables outside of $\qcond{c}$ do not affect the
output of $\Psi_c(\cdot)$ regardless of their value. Given the last assumption
and the induction assumption, all the children of $v$ represent conditional
distributions over the same set of variables, and because the weights of $v$ are
normalized to sum to one, then:
\begin{align*}
      \smashoperator[lr]{\sum_{\substack{
        \x \in \{0,1,*\}^N \\
        \forall i \not\in \qeff{v}, x_i = b_i \\
        \forall i \in \qeff{v}, x_i \in \{0,1\}
        }}} \Psi_v(\x)
      &={\sum_{\substack{
        \x \in \{0,1,*\}^N \\
        \forall i \not\in \qeff{v}, x_i = b_i \\
        \forall i \in \qeff{v}, x_i \in \{0,1\}
        }}} \smashoperator[lr]{\sum_{c \in \children{v}}} w_c \cdot \Psi_c(\x) \\
      &=\smashoperator[l]{\sum_{c \in \children{v}}} w_c \cdot
      \overbrace{\smashoperator[r]{\sum_{\substack{
        \x \in \{0,1,*\}^N \\
        \forall i \not\in \qeff{v}, x_i = b_i \\
        \forall i \in \qeff{v}, x_i \in \{0,1\}
        }}} \Psi_c(\x)}^{=1}  \\
        &= \smashoperator[l]{\sum_{c \in \children{v}}} w_c = 1
\end{align*}
where the inner sum equals to $1$ due to the normalization of the child nodes.

Finally, we will consider the case that $v \in P$ is a product node.
Recall that conditional decomposability means that the effective scopes of each
child of $v$ are disjoint sets, and that the directed dependency graph formed
by the children of $v$ is an acyclic graph. To prove this case, we will use a
secondary induction over the number of children of $v$. In the base case of $v$
having just a single child $\children{v} = \{c\}$, it holds that
$\Psi_v(\cdot) = \Psi_c(\cdot)$, and thus it is a normalized probability
function due to the primary induction assumption. Let us assume that out
secondary induction assumption holds for $v$ with $t$ children, and prove it
also holds for $t+1$ children. Let $\bar{c} \in \children{v}$ be child of $v$
that is a sink node in the induced dependency graph, i.e.~that none of the
variables in its effective scope are part of the conditional scope of another
child, hence the following holds:
\begin{align*}
      \smashoperator[r]{\sum_{\substack{
        \x \in \{0,1,*\}^N \\
        \forall i \not\in \qeff{v}, x_i = b_i \\
        \forall i \in \qeff{v}, x_i \in \{\!0,1\!\}
        }}} \Psi_v(\x)
      &=\smashoperator[r]{\sum_{\substack{
        \x \in \{0,1,*\}^N \\
        \forall i \not\in \qeff{v}, x_i = b_i \\
        \forall i \in \qeff{v}, x_i \in \{\!0,1\!\}
        }}} \Psi_{\bar{c}}(\x) \left(\smashoperator[r]{\prod_{\substack{c \in \children{v} \\ c \neq \bar{c}}}} \Psi_c(\x)\right) \\
      &\overset{(1)}{=}\smashoperator[l]{\sum_{\substack{
        \x \in \{0,1,*\}^N \\
        \forall i \not\in \qeff{v}, x_i = b_i \\
        \forall i \in \qeff{v}{\setminus} \qeff{\bar{c}}, x_i \in \{\!0,1\!\} \\
        \forall i \in \qeff{\bar{c}}, x_i = *
        }}} \!\!\!\!\left(\!\!\smashoperator[r]{\prod_{\substack{c \in \children{v} \\ c \neq \bar{c}}}} \Psi_c(\x)\!\!\right)
        \!\overbrace{\smashoperator[r]{\sum_{\substack{
        \z \in \{0,1,*\}^N \\
        \forall i \not\in \qeff{\bar{c}}, z_i = x_i \\
        \forall i \in \qeff{\bar{c}}, z_i \in \{0,1\}
        }}}
        \Psi_{\bar{c}}(\z)}^{=1} \\
      &\overset{(2)}{=}\smashoperator[l]{\sum_{\substack{
        \x \in \{0,1,*\}^N \\
        \forall i \not\in \qeff{v}, x_i = b_i \\
        \forall i \in \qeff{v}{\setminus} \qeff{\bar{c}}, x_i \in \{\!0,1\!\} \\
        \forall i \in \qeff{\bar{c}}, x_i = *
        }}} \!\!\!\!\left(\!\!\smashoperator[r]{\prod_{\substack{c \in \children{v} \\ c \neq \bar{c}}}} \Psi_c(\x)\!\!\right) = 1
\end{align*}
where the equality marked by $(1)$ is due to decomposing the sum into two nested
sums, one where we iterate over the different values of $\x$ just over the
coordinates matching the variables in the effective scope of $v$ that are not in
the effective scope of $\bar{c}$ and the second nested sum over the remaining
coordinates of the effective sum. Because the inner sum affects only the
variables in $\Psi_{\bar{c}}(\cdot)$ we can extract all over nodes out of it,
this is because of our assumption that $\bar{c}$ is a sink node and hence
$\qeff{\bar{c}}$ is not part of the scopes of the other children, in addition to
the fact that the effective scopes are disjoint sets. The equality marked by
$(2)$ is because $\Psi_{\bar{c}})(\cdot)$ is a normalized probability function
according to our primary induction assumption, hence the inner sum equals to
one. The final equality is due to our secondary induction assumption, as there
are only $t$ child nodes left and thus that sum also equals to one. This
concludes the proof for both the secondary and the primary induction assumption.

\subsection{Proof of Proposition \ref{prop:valid_cmo_tractability}}
\label{app:proof:valid_cmo_tractability}

By the second and third conditions in def.~\ref{def:valid_cmo}, all product
and sum nodes in an SPQN composed of valid CMOs must be conditionally D\&C,
and thus, according to theorem.~\ref{theorem:tractable_spqns}, we only need
to prove that it is conditionally sound for it to be tractable. We employ
induction on the depth of the SPQN rooted at $v \in V$ with the assumption that
all SPQNs up to depth $d$ that are composed of valid CMO nodes are strongly
conditionally sound, hence also valid distributions, strictly positive
functions, and that for all $\z \in \{0,1,*\}^N$ such that $z_i = *$ if
$i \in \qeff{v}$ it holds that $\Psi_v(\z) = 1$.

We begin with the base case of a CMO node connected to the two indicator leaf
nodes $\indc{x_i=0}$ and $\indc{x_i=1}$ for some $i \in [N]$, which according to
def.~\ref{def:valid_cmo} is the only valid CMO node that is connected to the
leaves. Under this case the output of the CMO node is equal to a single sum
node computing $w_1 \indc{x_i = 0} + w_2 \indc{x_i = 1}$, where $\w \in \R^2$ is
strictly positive. Since the output is simply a single sum node over
indicators of the same variable, it immediately follows that it is
conditionally decomposable, complete and sound. Additionally, since $\w$ is
strictly positive, then the output of the node is also strictly positive for
any value of $x_i$. Finally, when setting $x_i = *$ the output equals to
$w_1 \cdot 1 + w_2 \cdot 1 = 1$.

Let $v$ denote the root CMO node of an SPQN of depth $d+1$. Without losing our
generality, we can assume that $\alpha = \beta = 1$ (see def.~\ref{def:cmo})
with children $a_1,\ldots,a_\gamma,b_1,\ldots,b_\gamma \in V$, otherwise we can
substitute each of the products, $\prod_{j=1}^\alpha A_{ij}$ and
$\prod_{j=1}^\beta B_{ij}$, with an auxiliary valid CMO node that computes
just the product, i.e.~with no A-type children, which is trivially
conditionally sound. Since we assume all the children of $v$ represents
strictly positive functions, and since the output of $v$ is composed of
products and weighted sums with positive weights, then the output of $v$ is
also strictly positive. According to def.~\ref{def:valid_cmo}, the internal
sum and product nodes of $v$ are conditionally D\&C, and thus their
respective rooted sub-SPQNs are tractable by the induction assumption, which
means they represent valid distributions. Additionally,
def.~\ref{def:valid_cmo} also entails that the effective scopes of each of
$b_1,\ldots,b_\gamma$ are equal to $\qeff{v}$, and do not appear in the
conditional scopes of $a_1,\ldots,a_\gamma$. Now, for any $\aaa \in \{0,1,*\}^N$
the following holds:
\begin{align*}
    \sum_{\mathclap{\substack{
            \z \in \{0,1,*\}^N \\
            \forall i \not\in \qeff{v}, z_i = a_i \\
            \forall i \in \qeff{v}, z_i \in \{0,1\}
          }}} \Psi_{\qnu{v}}(\z)
    &= \smashoperator[l]{\sum_{\substack{
            \z \in \{0,1,*\}^N \\
            \forall i \not\in \qeff{v}, z_i = a_i \\
            \forall i \in \qeff{v}, z_i \in \{0,1\}
          }}} \sum_{i=1}^\gamma \Psi_{a_i}(\z) \Psi_{b_i}(\z) \\
    &\overset{(1)}{=}\sum_{i=1}^\gamma \Psi_{a_i}(\aaa)
    \overbrace{\smashoperator[lr]{\sum_{\substack{
                \z \in \{0,1,*\}^N \\
                \forall i \not\in \qeff{v}, z_i = a_i \\
                \forall i \in \qeff{v}, z_i \in \{0,1\}
              }}} \Psi_{b_i}(\z)}^{=1} \overset{(2)}{=} \Psi_{\qde{v}}(\aaa)   
\end{align*}
where the equality marked by $(1)$ is because the nodes of $a_i$ are not
affected by the changing coordinates specified by $\qeff{v}$, while the equality 
marked by $(2)$ follows from our induction assumption that the children
$b_1,\ldots,b_\gamma$ already represent normalized probability functions, and
thus summing over them equals to one.This proves that the denominator is a
marginal of the numerator, which prove that the SPQN rooted at $v$ is
conditionally sound. To prove that it is also strongly conditionally sound,
we simply notice that for any $\z \in \{0,1,*\}$ such that $z_i = *$ it holds
that $\Psi_{b_i}(\z) = 1$ based on our induction assumption as
$\qeff{b_i} = \qeff{v}$, and thus:
\begin{align*}
    \Psi_{\qnu{v}}(\z) = \sum_{i=1}^\gamma \overbrace{\Psi_{a_i}(\z)}^{=1} \Psi_{b_i}(\z) = \Psi_{\qde{v}}(\z)
\end{align*}
which proves that the SPQN rooted at $v$ is strongly conditionally sound.
Additionally from the conditionally sound property we have just proven, it thus
follow that
\begin{align*}
    \Psi_v(\z) = \frac{\Psi_{\qnu{v}}(\z)}{\Psi_{\qde{v}}(\z)} 
    =  \frac{\Psi_{\qde{v}}(\z)}{\Psi_{\qde{v}}(\z)} = 1
\end{align*}
proving that all of our induction assumptions hold and completing our proof of
the proposition.

\subsection{Proof of Theorem \ref{thm:spn_not_efficient}}
\label{app:proof:spn_not_efficient}

We heavily base our proof on \citet{Martens:2014tr}, who have proven a very
similar claim on a slightly different distribution on complete graphs, namely,
that SPNs cannot approximate the uniform distribution on the spanning trees of a
complete graph. Next, we go through the steps of their proof, citing the
relevant lemmas, and highlighting the places where our proof diverges.

We begin by citing the following decomposition lemma, paraphrased to match the
notations and definition of sec.~\ref{sec:pre}:
\begin{lemma}[paraphrase of theorem 39 of \citet{Martens:2014tr}]
\label{lemma:martens:spn_decomp}
    Suppose $\{\Psi_j(\x)\}_{j=1}^\infty$ are the respective outputs of a
    sequence of D\&C SPNs of size at most $s$ over $N$ binary variables, which
    converges point-wise (considered as functions of $\x$) to some function
    $\gamma$ of $\x$. Then we have that $\gamma$ can be written as:
    \begin{align}\label{eq:martens:spn_decomp}
        \gamma = \sum_{i=1}^k g_i h_i    
    \end{align}
    where $k \leq s^2$ and for all $i \in [k]$ it holds that $g_i$ and $h_i$ are
    real-valued non-negative functions of $\y_i$ and $\z_i$, respectively, where
    $\y_i$ and $\z_i$ are sub-sets / tuples of the variables in $\x$ satisfying
    that $\frac{N}{3} \leq |\y_i|, |\z_i| \leq \frac{2N}{3}$,
    $\y_i \cap \z_i = \emptyset$, and $\y_i \cup \z_i = \x$.
\end{lemma}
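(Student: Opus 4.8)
The plan is to adapt the argument of \citet{Martens:2014tr} (their Theorem~39), which this lemma paraphrases, organizing it into two stages: a \emph{finite} balanced decomposition for a single D\&C SPN, followed by a \emph{limiting} argument that transfers the decomposition to the point-wise limit $\gamma$.

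For the finite stage, I would fix one D\&C SPN of size $\le s$ computing some $\Psi_j$ and classify each node $v$ by the size of its scope relative to $N$: call $v$ \emph{large} if $\abs{\scope{v}} > \frac{2N}{3}$. The root is large and indicator leaves are not, and two structural facts drive the construction. First, by completeness a large sum node has only large children, and by decomposability a large product node has at most one large child (two disjoint children of scope $>\frac{2N}{3}$ would overflow $\scope{v}\subseteq[N]$); hence every descending chain of large nodes must terminate at a \emph{splitter}, i.e.\ a large product node all of whose children have scope $\le\frac{2N}{3}$. Second, at a splitter the children have disjoint scopes each of size $\le\frac{2N}{3}$, so a greedy bin-balancing of the children into two bundles, followed by assigning the ``context'' variables of $[N]\setminus\scope{v}$ (on which the bundles are constant) to the lighter side, yields $\Psi_v = g\cdot h$ for a \emph{balanced} bipartition $(\y,\z)$ of all of $\x$ with $\frac{N}{3}\le\abs{\y},\abs{\z}\le\frac{2N}{3}$. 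Propagating these splitter decompositions back up through the large frontier — absorbing normalized sum weights and the small off-path product factors into the appropriate $g$- or $h$-side (legal because those small variables were already placed on a chosen side during balancing) — expresses $\Psi_j$ as $\sum_i g_i h_i$ with each term carrying its own balanced bipartition and all factors non-negative (SPN weights and leaf values being non-negative).

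The step I expect to be the main obstacle is bounding the number of terms by $k\le s^2$. Naively combining additively through sum nodes and multiplicatively through products blows up exponentially over a reused sub-DAG, and I would stress that a \emph{single} fixed bipartition genuinely cannot avoid this: two straddling children of a product node multiply their decomposition ranks, so the fixed-partition non-negative rank can be exponential. It is precisely the freedom to assign a \emph{different} balanced bipartition per term that rescues a polynomial bound, and the counting must be carried out by charging terms to the $\le s$ distinct splitter-induced bipartitions and to the circuit structure above them, rather than by enumerating paths. I would follow the careful accounting of \citet{Martens:2014tr} to obtain the stated $s^2$ bound.

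For the limiting stage, each $\Psi_j$ yields such a decomposition into at most $s^2$ terms, and each of the $\le s^2$ term-slots is tagged by one of the finitely many (at most $2^N$) admissible balanced bipartitions of $\x$. By pigeonhole I would pass to a subsequence along which this tagging is constant. Each slot's two factors $g_i^{(j)},h_i^{(j)}$ are non-negative and, thanks to the normalization of the sum weights, bounded in $[0,1]$; since they are functions of finitely many binary variables they range in a compact set, so a further subsequence makes every factor converge point-wise to non-negative limits $g_i,h_i$ respecting the fixed bipartitions. Passing to the limit in $\Psi_j=\sum_i g_i^{(j)}h_i^{(j)}$ then gives $\gamma=\sum_{i=1}^{k} g_i h_i$ of exactly the required form, with $k\le s^2$, each $g_i,h_i\ge 0$, and each $(\y_i,\z_i)$ a balanced bipartition of $\x$, completing the proof.
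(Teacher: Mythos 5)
First, note that the paper does not actually prove this lemma: it is imported as a verbatim paraphrase of Theorem 39 of \citet{Martens:2014tr}, and the appendix uses it purely as a black box. So your attempt to reconstruct the argument is already doing more than the paper does, and the right benchmark is whether your sketch would stand on its own. Its overall architecture does match the cited proof: a finite balanced decomposition for each D\&C SPN in the sequence, followed by a pigeonhole-plus-compactness passage to the point-wise limit. The structural observations in your finite stage are sound (completeness forces a large sum node to have only large children; decomposability forbids two disjoint children of scope $>\frac{2N}{3}$, so a large product node has at most one large child; hence every maximal chain of large nodes ends at a splitter, and the balancing at a splitter~---~taking a single child of scope $\geq\frac{N}{3}$ as one side if one exists, otherwise greedily packing children of scope $<\frac{N}{3}$~---~together with the context variables yields a legal bipartition). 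The limiting stage is also essentially right, with one small repair: boundedness of the product $g_i^{(j)}h_i^{(j)}$ does not by itself bound each factor, so before invoking compactness you should rescale each pair, $g\mapsto g/\max g$ and $h\mapsto h\cdot\max g$; since $\y_i$ and $\z_i$ are disjoint and jointly exhaust $\x$, one has $\max_{\x} g\cdot h=(\max g)(\max h)\leq\max_{\x}\Psi_j\leq 1$, after which both factors live in $[0,1]$ and the subsequence extraction goes through.

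The genuine gap is the one you flag yourself: the bound $k\leq s^2$. This is the quantitative heart of the lemma~---~it is exactly what converts the ``each term is non-zero on at most a $2^{-\Omega(M)}$ fraction of spanning trees'' estimate into the exponential lower bound on $s$~---~and your proposal defers it entirely to ``the careful accounting of Martens.'' You correctly diagnose why the naive propagation fails (on a DAG, pushing splitter decompositions up through reused sub-circuits multiplies ranks, and a single fixed bipartition can force exponentially many terms), but diagnosing the failure is not the same as supplying the fix; the actual argument must charge each term of the decomposition to a node (or bounded collection of nodes) of the circuit so that the count is controlled by the circuit size rather than by the number of parse trees. Without that accounting the proof is a sketch with its central step outsourced to the very reference the lemma paraphrases~---~which is acceptable if the lemma is being cited, as the paper does, but not if it is being proved.
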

According to lemma~\ref{lemma:martens:spn_decomp}, it is sufficient to show that
if a function in the form of eq.~\ref{eq:martens:spn_decomp} is equal to a
strictly positive distribution of triangle-free graphs of $M$ vertices, denoted
by $d(\EEE)$, where $N = \binom{M}{2}$ is the number of variables representing 
the edges of the graph, then $k = 2^{\Omega(N)}$, because the $k$ is a lower
bound on the size of any SPN approximating~$d(\EEE)$.

Because the functions that comprise $\gamma$ are non-negative, then
$\gamma = 0$ if and only if for all $i$ it holds that $g_i h_i = 0$. Thus, if
$\gamma(\EEE) = d(\EEE) > 0$, i.e.~$\EEE$ represents a triangle-free graph, 
then either $g_i = 0$ or $h_i = 0$ on $\EEE$. We will prove that
$k = 2^{\Omega(N)}$ by showing that each term $g_i h_i$ can be non-zero on at
most a small fraction of the triangle-free graphs, and more specifically, that
it can be non-zero only on a small fraction of spanning trees, which are only a
sub-set of all triangle-free graphs.

Let $g$ and $h$ be functions as above, such that
$\frac{N}{3} \leq |\y|, |\z| \leq \frac{2N}{3}$, $\y \cap \z = \emptyset$, and
$\y_i \cup \z_i = \EEE$, and that $d(\EEE) = 0$ implies $g(\y) = 0$ or
$h(\z) = 0$. Examining the possible triangles of $\EEE$, we single out all the
triangles such that some of the edges are part of $\y$ and some of $\z$. Notice
that for such triangles the function $g \cdot h$ must employ a conservative
strategy, as each function on its own only see a part of the possible edges of
the triangle and hence cannot decide whether all edges are in the graph or not.
\citet{Martens:2014tr} call such triplet of edges \emph{constraint triangles},
and prove the following claims:
\begin{claim}[Paraphrase of proposition 42 of \citet{Martens:2014tr}]
\label{claim:martens:conservative_strategy}
    Let $E_{i_1 i_2}$, $E_{i_2 i_3}$, and $E_{i_1 i_3}$ be three different edges
    that form a constraint triangle with respect to $g$ and $h$ as above, for
    which if all edges are part of the graph then $g \cdot h = 0$. Additionally,
    suppose that both $E_{i_1 i_2}$ and $E_{i_2 i_3}$ are in the same set of
    variables with respect to the partition $\y \cup \z$. Then the following
    properties hold:
    \begin{itemize}
        \item $g(\y) \cdot h(\z) = 0$ for all values of $\EEE$ such that
              $E_{i_1 i_2} = 1$ and $E_{i_2 i_3} = 1$, i.e.~are part of the
              graph $\EEE$ represents.
        \item $g(\y) \cdot h(\z) = 0$ for all values of $\EEE$ such that
              $E_{i_1 i_3} = 1$, i.e.~is part of the graph $\EEE$ represents.
    \end{itemize}
\end{claim}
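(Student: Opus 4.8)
The plan is to leverage two facts established earlier in the excerpt: that $d(\EEE)$ vanishes \emph{exactly} on configurations representing a triangle-containing graph, and that in the decomposition $\gamma = \sum_{i=1}^k g_i h_i$ every summand is non-negative. Together these force each product $g_i h_i$ to vanish on every triangle-containing configuration; restricting attention to a single term $g \cdot h$ and to the triangle $\{i_1, i_2, i_3\}$, this is precisely the stated premise that $g \cdot h = 0$ whenever $E_{i_1 i_2} = E_{i_2 i_3} = E_{i_1 i_3} = 1$. Since the three edges form a constraint triangle with two of them on one side of the partition and the third on the other, I would fix the (only possible) labelling in which $g$ sees $E_{i_1 i_2}, E_{i_2 i_3} \in \y$ while $h$ sees $E_{i_1 i_3} \in \z$, so that $g$ is blind to the value of $E_{i_1 i_3}$ and $h$ is blind to the values of $E_{i_1 i_2}$ and $E_{i_2 i_3}$.

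The heart of the argument is the \emph{rectangular} structure of the domain: because $\y$ and $\z$ are disjoint and $\y \cup \z = \EEE$, any assignment to $\y$ may be paired with any assignment to $\z$ to yield a legal full configuration. I would use this to establish the two listed properties as the two alternative \emph{conservative strategies}, at least one of which the term must adopt. Concretely, suppose the first property fails, i.e.\ there is an assignment $\y^0$ with $E_{i_1 i_2} = E_{i_2 i_3} = 1$ and $g(\y^0) > 0$. Pairing $\y^0$ with an \emph{arbitrary} assignment to $\z$ in which $E_{i_1 i_3} = 1$ produces a configuration containing the triangle, whence $g(\y^0) \cdot h(\z) = 0$; because the value $g(\y^0)$ is unaffected by the choice of $\z$ and is positive, this forces $h(\z) = 0$. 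As $\z$ ranged over \emph{all} assignments with $E_{i_1 i_3} = 1$, we obtain $g \cdot h = 0$ whenever $E_{i_1 i_3} = 1$, which is exactly the second property. By symmetry, if the second property fails then the first must hold, so the term is always forced into at least one of the two strategies.

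The step I expect to be the main obstacle is making the \emph{separation} argument airtight: the conclusion rests entirely on the fact that $g$ cannot distinguish the value of the $\z$-edge and $h$ cannot distinguish the values of the $\y$-edges, combined with the freedom to glue partial assignments across the partition. I would be careful to phrase ``for all values of $\EEE$'' correctly, so that all coordinates outside the three named edges remain unconstrained, and to run the case split through its contrapositive so that the dichotomy emerges cleanly. I would also stress that the two bullets are genuine alternatives rather than a simultaneous conjunction, since a single non-zero term may satisfy one while violating the other; it is exactly this per-term dichotomy, one conservative strategy forced on each $g_i h_i$, that the subsequent claims exploit to bound how large a fraction of the triangle-free graphs, and in particular of the spanning trees, any one term can support.
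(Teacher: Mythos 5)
Your argument is correct, and it is worth noting that the paper itself supplies no proof of this claim at all: it is imported as a paraphrase of Proposition~42 of \citet{Martens:2014tr}, so your gluing argument is a genuine (and standard) reconstruction rather than a restyling of anything in the appendix. The core of your proof is sound: non-negativity of the terms forces $g\cdot h$ to vanish on every triangle-containing configuration; the rectangularity of the domain ($\y\cap\z=\emptyset$, $\y\cup\z=\EEE$) lets you fix a witness $\y^0$ with $E_{i_1 i_2}=E_{i_2 i_3}=1$ and $g(\y^0)>0$ and pair it with every $\z$ having $E_{i_1 i_3}=1$, forcing $h\equiv 0$ on all such $\z$; and the contrapositive yields the dichotomy.

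One point you raise deserves emphasis, because it is actually a correction to the statement as written. Read literally, ``Then the following properties hold'' followed by two bullets asserts the \emph{conjunction} of the two properties, which is false in general and is not what your argument (or the original proposition) establishes. What is provable, and what you prove, is that \emph{at least one} of the two conservative strategies is forced on each term. This disjunctive reading is also the only one consistent with how the claim is consumed downstream: the subsequent sentence in the paper glosses it as ``either $E_{i_1 i_2}$ and $E_{i_2 i_3}$ are not part of the graph, or $E_{i_1 i_3}$ is not,'' and Claim~\ref{claim:martens:upper_bound} counts $C$ constraints ``each one of the two forms,'' i.e.~one form per constraint triangle. Your proof is therefore correct for the statement as it is actually used; just be explicit, as you already are in your final paragraph, that you are proving the disjunction and that the claim's phrasing should be read that way. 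One minor tightening: the failure of the first bullet gives you a full configuration with $g(\y^0)h(\z^0)>0$, from which you extract only $g(\y^0)>0$; stating that explicitly makes the ``blindness'' step airtight, since $g(\y^0)$ is then a fixed positive constant independent of the $\z$ you glue on.
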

\begin{claim}[Paraphrase of lemma 43 of \citet{Martens:2014tr}]
\label{claim:martens:num_triangles}
    Given any partition of the edges of $\EEE$ into disjoint sets $\y\cup \z$,
    such that $\frac{N}{3} \leq |\y|, |\z| \leq \frac{2N}{3}$, then the total
    number of constraint triangles is lower bounded by $\frac{M^3}{60}$.
\end{claim}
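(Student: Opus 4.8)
The plan is to recognise the constraint triangles as exactly the \emph{bichromatic} triangles of the edge $2$-colouring in which every edge of $\y$ is coloured red and every edge of $\z$ is coloured blue, and then to bound their count from below by controlling the number of monochromatic triangles from above. For a vertex $v$ write $r_v$ and $b_v$ for the numbers of incident red and blue edges, so $r_v+b_v=M-1$. A triangle is a constraint triangle precisely when it is neither all-red nor all-blue. Counting ``cherries'' (paths of length two) by their centre, each $v$ is the centre of exactly $r_v b_v$ cherries with one red and one blue edge, and every bichromatic triangle contains exactly two such cherries (one at each of its two vertices that meet both a red and a blue edge of the triangle), while a monochromatic triangle contains none. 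Summing over centres therefore yields
\begin{align*}
\#\{\text{constraint triangles}\}\;=\;\tfrac12\sum_{v=1}^{M} r_v b_v\;=\;\binom{M}{3}-t_{\mathrm{red}}-t_{\mathrm{blue}},
\end{align*}
where $t_{\mathrm{red}}$ and $t_{\mathrm{blue}}$ are the numbers of all-red and all-blue triangles, so it suffices to push $t_{\mathrm{red}}+t_{\mathrm{blue}}$ below $\binom{M}{3}-\tfrac{M^3}{60}$.

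Here I would use the balance hypothesis $\tfrac{N}{3}\le\abs{\y},\abs{\z}\le\tfrac{2N}{3}$ with $N=\binom{M}{2}$, which caps the number of edges of each colour at $\tfrac{2N}{3}$. Since all red (resp.\ blue) triangles live inside the red (resp.\ blue) graph, I would invoke the standard extremal estimate that a graph on $M$ vertices with $e$ edges has at most $\tfrac{\sqrt2}{3}e^{3/2}(1+o(1))$ triangles, the clique being extremal by Kruskal--Katona. This gives $t_{\mathrm{red}}+t_{\mathrm{blue}}\le\tfrac{\sqrt2}{3}\bigl(\abs{\y}^{3/2}+\abs{\z}^{3/2}\bigr)(1+o(1))$, and since $\abs{\y}^{3/2}+\abs{\z}^{3/2}$ is convex in $\abs{\y}$ under $\abs{\y}+\abs{\z}=N$, its maximum over the admissible range is attained at the endpoint $\abs{\y}=\tfrac{2N}{3},\,\abs{\z}=\tfrac{N}{3}$. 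Plugging in $N\approx\tfrac{M^2}{2}$ gives $t_{\mathrm{red}}+t_{\mathrm{blue}}\le 0.124\,M^3(1+o(1))$, which sits safely below $\binom{M}{3}-\tfrac{M^3}{60}\approx 0.150\,M^3$; combined with the identity above this closes the argument.

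I expect the main obstacle to be quantitative bookkeeping rather than anything structural: one must check that the constant $\tfrac{1}{60}$ really survives once the asymptotic estimates $\binom{M}{3}\approx\tfrac{M^3}{6}$, $N\approx\tfrac{M^2}{2}$, and the triangle bound are replaced by exact inequalities with their lower-order terms tracked for all (or all sufficiently large) $M$. The wide margin between $0.124$ and $0.150$ strongly suggests this is routine. Should the Kruskal--Katona step feel too heavy, I would instead argue directly on $\sum_v r_v b_v$ via realisability: a small value of this sum forces almost every vertex to be nearly monochromatic, yet any red-heavy vertex and any blue-heavy vertex are joined by an edge whose colour necessarily makes one of them bichromatic, and counting such cross edges against the edge-balance constraint again forces $\sum_v r_v b_v=\Omega(M^3)$.
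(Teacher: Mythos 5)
Your argument is correct, and it is worth noting that the paper itself supplies no proof of this claim at all: it is imported verbatim as a paraphrase of Lemma~43 of Martens and Medabalimi, so your derivation is a genuinely self-contained alternative rather than a reconstruction of anything in the text. The two pillars of your proof both check out. The identity $\#\{\text{constraint triangles}\}=\tfrac12\sum_v r_v b_v=\binom{M}{3}-t_{\mathrm{red}}-t_{\mathrm{blue}}$ is exact, since in the complete graph every mixed cherry closes into a (necessarily bichromatic) triangle and each bichromatic triangle owns exactly two mixed cherries. The Kruskal--Katona step is also clean: from $\binom{x}{2}=e$ one gets $x-2<\sqrt{2e}$, hence $\binom{x}{3}=e\cdot\tfrac{x-2}{3}<\tfrac{(2e)^{3/2}}{6}=\tfrac{\sqrt2}{3}e^{3/2}$ with no error term at all, so the $(1+o(1))$ you carry is not even needed there. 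With $N\le M^2/2$, convexity puts the worst case at $\abs{\y}=\tfrac{2N}{3}$, giving $t_{\mathrm{red}}+t_{\mathrm{blue}}\le\bigl((2/3)^{3/2}+(1/3)^{3/2}\bigr)M^3/6\approx0.1228\,M^3$, against $\binom{M}{3}-\tfrac{M^3}{60}=0.15\,M^3-O(M^2)$; the residual $O(M^2)$ from $\binom{M}{3}\ge\tfrac{M^3}{6}-\tfrac{M^2}{2}$ is absorbed by the $0.027\,M^3$ margin once $M\gtrsim19$. The only loose end is therefore the literal validity of the constant $\tfrac{1}{60}$ for small $M$, which you flag yourself and which is immaterial here, since the claim is consumed only to produce the asymptotic bound $k\ge2^{\Omega(M)}$ in Theorem~\ref{thm:spn_not_efficient}. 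What your route buys is transparency (an exact combinatorial identity plus a standard extremal bound) at the cost of invoking Kruskal--Katona; your proposed fallback of lower-bounding $\sum_v r_v b_v$ directly from the edge-balance constraint is closer in spirit to the elementary counting one would expect in the cited source, but you do not need it.
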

Claim~\ref{claim:martens:conservative_strategy} means that if
$g(\y) \cdot h(\z) > 0$ then either $E_{i_1 i_2}$ and $E_{i_2 i_3}$ are not part
of the graph, or $E_{i_1 i_3}$ is not part of it, and thus each constraint
limits what graphs it can be non-zero on.
Claim~\ref{claim:martens:num_triangles} finds a lower bound on the number of
such constraints, which brings us to the following claim by
\citet{Martens:2014tr}, which finds an upper bound on percentage of spanning
trees that obey any given $C$ set of distinct constraints:
\begin{claim}[Paraphrase of lemma 44 of \citet{Martens:2014tr}]
\label{claim:martens:upper_bound}
    Suppose we are given $C$ distinct constraints which are each one of the two
    forms discussed above. Then, of all the spanning trees of the complete graph
    on $M$ vertices, a proportion of at most:
    \begin{align*}
        \left(1 - \frac{C}{M^3}\right)^{\nicefrac{C}{6M^2}}
    \end{align*}
    of them obey all of the constraints.
\end{claim}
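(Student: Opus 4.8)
The plan is to read the stated proportion as a probability under the uniform measure on spanning trees of $K_M$, and to bound it using the well-understood correlation structure of that measure. Let $T$ be a uniformly random spanning tree and let $\A$ be the event that $T$ obeys all $C$ given constraints; the proportion in the statement is exactly $\prob[\A]$. Each constraint is a monotone \emph{decreasing} event in the edge-indicators $\{\indc{e\in T}\}_e$: a form-two constraint ``$E_{i_1 i_3}{=}0$'' is the event $\indc{E_{i_1 i_3}\in T}=0$, while a form-one constraint ``$\neg(E_{i_1 i_2}{=}1 \wedge E_{i_2 i_3}{=}1)$'' forbids a fixed \emph{cherry} (a pair of adjacent edges) from lying in $T$. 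First I would record two exact counting facts from the matrix--tree / generalized Cayley formula: a fixed edge lies in $T$ with probability $2/M$, and two adjacent edges lie in $T$ simultaneously with probability $3/M^2$. Hence every individual constraint is violated with probability at least $3/M^2$ and satisfied with probability at most $1-3/M^2$.

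Next I would exploit that uniform spanning trees form a strongly Rayleigh, hence negatively associated, measure, so that for constraints supported on \emph{disjoint} edge sets the satisfaction events obey the product upper bound $\prob[\bigcap_j A_j]\le \prod_j \prob[A_j]$. The obstruction to applying this to all $C$ constraints at once is their heavy overlap: no vertex lies in more than $\binom{M-1}{2}=\OO(M^2)$ triangles, so the constraints share vertices and edges densely. I would therefore group the $C$ constraints into $\approx C/(6M^2)$ mutually non-interfering batches with pairwise disjoint edge supports, each batch aggregating on the order of $6M^2$ constraints over $\OO(M)$ shared edges; this packing is feasible because each constraint touches only a constant number of edges while $\binom{M}{2}=\Theta(M^2)$ edges are available, and the per-vertex triangle bound $\binom{M-1}{2}$ is what pins the batch count at $\approx C/(6M^2)$ (the $6$ being an absolute packing constant). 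For a single batch a first-moment estimate gives an expected number of realized forbidden cherries of order $(\text{batch size})\cdot 3/M^2 = \Theta(1)$, which I would convert into a violation probability of at least $C/M^3$ for that batch. Multiplying these $\approx C/(6M^2)$ disjointly supported factors through negative association then yields $\prob[\A]\le (1-C/M^3)^{C/(6M^2)}$.

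The main obstacle is precisely the per-batch \emph{lower} bound on the violation probability. A first moment only upper-bounds a union, whereas here I must show that within a batch at least one forbidden configuration is realized with probability bounded below by a constant of order $C/M^3$; this requires controlling the second moment (or performing an inclusion--exclusion truncation) of the count of realized forbidden cherries inside the batch, using the negative correlation of adjacent edge pairs so that the count does not concentrate away from being positive. This delicate combinatorial estimate --- extracting an effectively independent family of constraint batches and lower-bounding each batch's violation --- is exactly the quantitative content of Lemma~44 of \citet{Martens:2014tr}; since it concerns only spanning trees and the abstract constraints, and not the triangle-free structure, it transfers to our setting verbatim, so I would cite it for the bookkeeping while the route above supplies the conceptual skeleton.
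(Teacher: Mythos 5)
The first thing to note is that the paper does not prove this claim at all: it is imported verbatim, explicitly flagged as a paraphrase of Lemma~44 of \citet{Martens:2014tr}, and used as a black box inside the proof of Theorem~\ref{thm:spn_not_efficient}. Your closing move --- cite Lemma~44 for the quantitative content --- is therefore exactly what the paper itself does, and as a justification of the claim it suffices. The problem is that everything you place in front of that citation is offered as a self-contained ``conceptual skeleton,'' and that skeleton has concrete gaps, so it cannot be read as an independent proof.

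Two steps in particular are unsupported. First, the packing: you assert that the $C$ constraints can be grouped into roughly $C/(6M^2)$ batches with pairwise disjoint edge supports, each batch holding on the order of $6M^2$ constraints on only $\OO(M)$ shared edges. But the constraint family is not yours to design --- it is whatever set of constraint triangles the fixed balanced partition $\y \cup \z$ of the $\binom{M}{2}$ edges happens to induce --- and nothing guarantees that $6M^2$ of its cherries can be concentrated on $\OO(M)$ edges, let alone that the whole family decomposes into edge-disjoint clusters of that shape; for a spread-out family whose cherries are nearly edge-disjoint, no such batch exists. Since the exponent $C/(6M^2)$ in the bound is manufactured entirely by this packing, the skeleton does not actually derive the stated form. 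Second, the per-batch violation lower bound of order $C/M^3$ is the entire content of the lemma: a first-moment count of $\Theta(1)$ expected realized cherries gives, via Markov, an \emph{upper} bound on the probability that some cherry is realized, not a lower bound, and you acknowledge that the second-moment or inclusion--exclusion control needed to reverse it is missing (it is also unexplained why the resulting constant should scale as $C/M^3$ rather than with the batch size). The exact marginals $2/M$ and $3/M^2$ under the uniform spanning-tree measure and the negative-association product bound for decreasing events on disjoint supports are correct and pleasant observations, but they do not carry the argument. If the goal is to justify the claim as the paper uses it, cite Lemma~44 and stop; if the goal is to reprove it, the two steps above are where all the work lives.
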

Given that we have $C > \frac{M^3}{60}$, then it holds that
$g(\y) \cdot h(\z) > 0$ on at most $\frac{1}{2^{\nicefrac{M}{15120}}}$ of all
the possible spanning trees.

To conclude, $\gamma(\EEE)$ can be non-zero on at most
$\frac{k}{2^{\nicefrac{M}{15120}}}$ fraction of all spanning trees, and since
$d(\EEE)$ should be positive for any $\EEE$ that represents a triangle free
graph, such as any spanning tree, then if $\gamma(\cdot) = d(\cdot)$ it must be
that $\frac{k}{2^{\nicefrac{M}{15120}}} \geq 1$, which means
$k \geq 2^{\nicefrac{M}{15120}}$, or in other words, $s = O(2^{\Omega(M)})$.

\subsection{Proof of Theorem \ref{thm:spqn_are_efficient}}
\label{app:proof:spqn_are_efficient}

We start by examining all triangles for which the edge $E_{i_2 i_3}$ is the
largest edge (according to lexical order). For every $1 < i_2 < i_3 \leq M$,
we define the following variables:
\begin{align*}
    \varphi_{i_2 i_3}^{(1)} &\equiv\smashoperator[l]{\prod_{i_1 = 1}^{i_2 - 1}}
        \frac{\splitfrac{\indc{E_{i_1 i_2} {=} 1} \indc{E_{i_1 i_3} {=} 0}
            + \indc{E_{i_1 i_2} {=} 0} \indc{E_{i_1 i_3} {=} 1}}{
            + \indc{E_{i_1 i_2} {=} 0} \indc{E_{i_1 i_3} {=} 0}}}{3} \\
    \varphi_{i_2 i_3}^{(2)} &\equiv \smashoperator[lr]{\sum_{i_1 = 1}^{i_2 - 1}}
        \frac{1}{i_2 -1} \indc{E_{i_1 i_2}{=}1} \indc{E_{i_1 i_3}{=}1} \\
    &\phantom{\equiv\smashoperator[lr]{\sum_{i_1 = 1}^{i_2 - 1}}} {\cdot}
        \smashoperator[lr]{\prod_{i' \neq i_1}^{i_2 - 1}}
            \cramped{\frac{\indc{E_{i' i_2} {\shorteq} 0} {+} \indc{E_{i' i_2} {\shorteq} 1}}{2} }{\cdot }
            \frac{\indc{E_{i' i_3} {\shorteq} 0} {+} \indc{E_{i' i_3} {\shorteq} 1}}{2} \\
    \Phi_{i_2 i_3} &\equiv \frac{
            \frac{1}{2} \varphi_{i_2 i_3}^{(1)} \frac{\indc{E_{i_2 i_3} = 0} + \indc{E_{i_2 i_3} = 1}}{2} +
            \frac{1}{2} \varphi_{i_2 i_3}^{(2)} \indc{E_{i_2 i_3} = 0}
        }{
            \frac{1}{2}\varphi_{i_2 i_3}^{(1)} +
            \frac{1}{2}\varphi_{i_2 i_3}^{(1)}
        }
\end{align*}
Where $\varphi_{i_2 i_3}^{(1)}$ is a normalized probability
over the edges $E_{1,i_2},\ldots,E_{i_2-1,i_2}$ and
$E_{1,i_3},\ldots,E_{i_2-1,i_3}$, such that $\phi_{i_2 i_3}^{(1)}$ is non-zero
if and only if the edge $E_{i_2 i_3}$ cannot complete a triangle, i.e.~whether
$E_{i_2 i_3} = 0$ or $E_{i_2 i_3} = 1$ the graph can be triangle-free as long
as the other triplets of edges not containing $E_{i_2 i_3}$ do not result in a
triangle. Similarly, $\varphi_{i_2 i_3}^{(2)}$ is a normalized probability over
the same edges, but $\varphi_{i_2 i_3}^{(2)}$ is non-zero if and only if the
inclusion of the edge $E_{i_2 i_3}$ will necessarily complete one of the
triangles, i.e.~for the graph to be triangle-free then it must hold that
$E_{i_2 i_3} = 0$. Also notice that both $\varphi_{i_2 i_3}^{(1)}$ and
$\varphi_{i_2 i_3}^{(2)}$ can be defined by a D\&C SPN. Given the above,
either $\varphi_{i_2 i_3}^{(1)}>0$ or $\varphi_{i_2 i_3}^{(2)}>0$, hence the
denominator of $\Phi_{i_2 i_3}$ is always non-zero. Additionally, if
$\varphi_{i_2 i_3}^{(2)}$ is non-zero then $E_{i_2 i_3} = 0$ or else the graph
has a triangle, and otherwise either $E_{i_2 i_3} = 0$ or $E_{i_2 i_3} = 1$,
hence the numerator of $\Phi_{i_2 i_3}$ is greater than zero if and only if none
of the triangles considered are part of the graph. It is also trivial to verify
that the numerator is also a D\&C SPN, and hence conditionally D\&C. Finally, it
is clear from the construction that $\Phi_{i_{2},i_{3}}$ is strongly
conditionally sound, thus it is equivalent to a conditional distribution of
$E_{i_2 i_3}$ conditioned on the other edges of the triangles whose
$E_{i_2 i_3}$ is their largest edge, where
$\qeff{\Phi_{i_2 i_3}} = \{ E_{i_2 i_3} \}$ and
$\qcond{\Phi_{i_2 i_3}} = \{ E_{i_1 i_2}, E_{i_1 i_3} | 1 \leq i_1 < i_2 \}$.

With the above conditional distributions defined for all edges $E_{i_2 i_3}$
such that $1 < i_2 < i_3 \leq M$, we can now construct a strictly positive
distribution over triangle-free graphs. First, let us define
$\Phi_{1 i} \equiv \frac{\indc{E_{1 i} = 0} + \indc{E_{1 i} = 1}}{2}$ for all
$1 < i \leq M$, for which $\qeff{\Phi_{1 i}} = \{ E_{1 i}\}$ and
$\qcond{\Phi_{1 i}} = \emptyset$. Then, we define the probability as
$\Phi \equiv \prod_{1 \leq i < j \leq M} \Phi_{i j}$, and due to the definition
of $\Phi_{i_2 i_3}$ it is once more trivial to verify that $\Psi$ is
conditionally decomposable, and specifically that the induced dependency graph
is indeed cycle-free~--~this is due to the choice of lexical order which
guarantees that $E_{i_2 i_3}$ can only depend on edges which are smaller than
it, forbidding the formation of any cycle. In conclusion, $\Phi$ is a tractable
SPQN, which is non-zero if and only if the edges in $\E$ represent a
triangle-free graph~--~as required. Additionally, since the size of
$\varphi_{i_2 i_3}^{(1)}$ is at most $O(M)$, and the size of
$\varphi_{i_2 i_3}^{(2)}$ is at most $O(M^2)$, then the size of $\Phi$ is at
most $O(M^4)$, which proves the main result.

With regards to realizing the same SPQN with valid CMOs, notice that the
quotient nodes already follow the structure of valid CMOs, and that the
numerator and denominator are simply D\&C SPNs which SPQNs composed of valid
CMOs can arbitrarily approximate without changing the size of the model. Thus
this distribution can be approximated arbitrarily well with an SPQN composed of
valid CMO nodes of size at most $O(M^4)$.

\section{Experiments}\label{app:exp}

As a preliminary demonstration of the practical advantages of SPQNs over standard SPNs,
we have conducted a basic experiment on a synthetic dataset suited to the strengths of
SPQNs.

In essence, the difference between the two models is that SPNs have a limited
ability to represent intricate correlations between large set of variables~--~once a
sample is drawn for some variable, the result has no further effect on the rest of
the sampling process, i.e.~the drawing of the children of the remaining sum nodes.
In contrast, SPQN can due to the conditional distributions of its nodes.
We have came up with a simple synthetic dataset to demonstrate this difference,
comprising of $N \times N$ binary images generated as follows: first sample a random
location in the image, and then begin drawing a continuous non-overlapping path, where
at each step the path can be extended either forward, left, or right, with respect to
the direction of movement, with equal probability, and given that the next position is
free and not directly adjacent to a previous section of the line (excluding the current
position). See fig.~\ref{fig:dataset} for a selection of samples from this distribution
for $N = 8$. Since a pixel can be "on" only if there is a free path connecting it to
either ends of the drawn path, then it is dependent on all previously sampled pixels,
following our initial motivation.

\begin{figure}
\centering
\includegraphics[width=\linewidth]{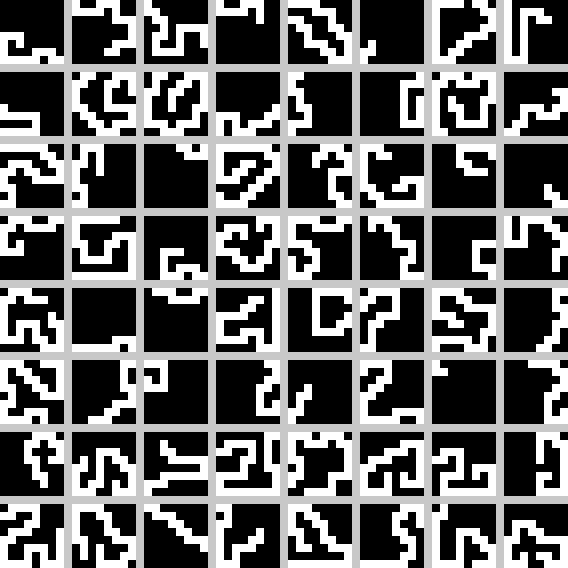}
\caption{Samples from the synthetic dataset we have designed to showcase the advantages
of SPQNs over SPNs.}
\label{fig:dataset}
\end{figure}

In our experiments we have randomly sampled from the above generative process for the
case of $N = 8$: 50000 examples for the training set, 1000 examples for validation and
10000 examples for the test set. We have trained SPNs with the structure learning algorithm
proposed by \citet{Gens:2013ufa}, where the hyper-parameters where chosen using grid-search
following the same space as in the original article. The best model had in total $189,128$
nodes.

For SPQNs we have first flatten the $8 \times 8$
binary image to a 1D array of size $64$, and then chosen a simple architecture mimicking a
1D convolutional network. Namely, the graph is composed of a sequence of ``convolutional''
layers, where each layer $d$ is defined by a stride $S_d$, receptive field $R_d$, and number of channels $C_d$,
and is composed of many CMO nodes spatially arranged and stacked according to $C_d$. For each
layer $d$, spatial position $t$, and channel $c$, there is a CMO node whose effective scope is
connected to nodes of layer $d-1$ at the spatial locations $t \cdot S - (S-1), \ldots, t \cdot S$
via intermediate sum nodes that are each connected the channels of a given spatial location, and 
similarly for the conditional scope at the spatial locations $t \cdot S - R + 1, t \cdot S - S$.
Essentially, the output $O_{d,t,c}$ of layer $d$ at location $t$ and channel $c$ is equivalent
to the following:
\begin{align*}
	O_{d,t,c} = \frac{
		\splitfrac{\sum\limits_{\mathclap{i=1}}^{C_d} W^{\text{Out}}_{c,i}
			\left(\prod\limits_{j=1}^S \sum\limits_{k=1}^{C_{d{-}1}} W^{\text{In}}_{i,j,k} O_{d-1,t \cdot S - j + 1,k}\right)
			}{
			\left(\prod\limits_{j=S+1}^R \sum\limits_{k=1}^{C_{d{-}1}} W^{\text{In}}_{i,j,k} O_{d-1,t \cdot S - j + 1,k}\right)}}{
		\sum\limits_{\mathclap{i=1}}^{C_d} W^{\text{Out}}_{c,i} \prod\limits_{j=1}^S \sum\limits_{k=1}^{C_{d{-}1}} W^{\text{In}}_{i,j,k} O_{d-1,t \cdot S - j + 1,k}}.
\end{align*}
The above architecture was trained with the Adam~\citep{Kingma:2014us} variant of SGD, using
$\beta_1 = \beta_2 = 0.9$, a learning rate of $5e-2$, mini-batches of 100 samples each, and
for 20 epochs. The other hyper-parameters where chosen using cross-validation, where the best
performing model was composed of 4 layers, with receptive fields equal to
$R_1 = R_2 = 32, R_3 = 16, R_4 = 1$, strides equal to $S_1 = S_2 = 2, S_3 = 16, S_4 = 1$,
and number of channels equal to $R_1 = R_2 = R_3 = 64, R_4 = 1$. In terms of sum, product,
and quotient nodes involved in the computation, it amounts to just $108,817$ nodes, on par
with the SPN model found via structure learning.

In the final results, the best SPN model attained a log-likelihood score of $-22.68$ on the
training set, $-24.35$ on the validation set, and $-24.71$ on the test set. In contrast,
the best SPQN model attained $-15.94$ on the training set, $-16.36$ on the validation set,
and $-16.51$ on the test set, which amounts to a $35\%$ improvement over SPNs. Given both models
are of similar size, and despite the fact no structure learning was used for SPQN model, then the
SPQN model clearly outperform by a large margin the standard SPN model. Nevertheless, it is
important to stress that further empirical evaluations are required to completely validate the
advantages of SPQNs over SPNs, even more so on real-world tasks.

\end{document}